\documentclass{article}
\pdfoutput=1
\PassOptionsToPackage{table}{xcolor}

\usepackage[preprint]{icml2026}

\usepackage[utf8]{inputenc} 
\usepackage[T1]{fontenc}    
\usepackage{hyperref}       
\usepackage{url}            
\usepackage{booktabs}       
\usepackage{amsfonts}       
\usepackage{nicefrac}       
\usepackage{microtype}      
\usepackage{bbm}
\usepackage{xparse}
\usepackage{amsmath, amssymb}
\usepackage{listings}
\definecolor{mintbg}{HTML}{f8f8f8}
\definecolor{mintframe}{HTML}{cccccc}
\lstdefinestyle{mintedlike}{
  backgroundcolor=\color{mintbg},
  basicstyle=\ttfamily,
  columns=flexible,
  keepspaces=true,
  breaklines=true,
  breakatwhitespace=false,
  frame=single,
  rulecolor=\color{mintframe},
  framesep=3mm,
  xleftmargin=20pt,
  xrightmargin=20pt,
  framexleftmargin=10pt,
  tabsize=4,
  showstringspaces=false,
  postbreak=\mbox{\textcolor{gray}{$\hookrightarrow$}\space}
}
\usepackage{tikz}
\usetikzlibrary{tikzmark,calc}
\usepackage{makecell}
\usepackage{multirow}

\providecommand*{\Autoref}[1]{%
  \begingroup
  \def\chapterautorefname{Chapter}%
  \def\sectionautorefname{Section}%
  \def\subsectionautorefname{Subsection}%
  \def\subsubsectionautorefname{Subsubsection}%
  \def\figureautorefname{Figure}%
  \def\tableautorefname{Table}%
  \def\equationautorefname{Equation}%
  \def\lstnumberautorefname{Line}
  \autoref{#1}%
  \endgroup
}

\usepackage[flushmargin,hang]{footmisc}
\hypersetup{
  colorlinks=true,
  linkcolor=blue!35!black,   
  citecolor=blue!35!black,   
  urlcolor=blue!35!black     
}

\usepackage{inconsolata}

\usepackage[greek.ancient,english]{babel}

\usepackage{hyperref}       
\usepackage{url}            
\usepackage{booktabs}       
\usepackage{amsfonts}       
\usepackage{nicefrac}       
\usepackage{microtype}      

\usepackage{latexsym}
\usepackage{amsfonts}
\usepackage{amsmath}
\usepackage{amssymb}
\usepackage{amsthm}

\usepackage{thmtools}
\usepackage{thm-restate}

\usepackage{relsize}
\usepackage{mathtools}
\usepackage{bm}
\usepackage{bbm}
\usepackage{xfrac}
\usepackage{stmaryrd}
\usepackage[makeroom]{cancel}

\usepackage{xspace}
\usepackage{adjustbox}
\usepackage{relsize}

\usepackage{url}

\usepackage{array}

\usepackage[inline,shortlabels]{enumitem}
\setitemize{noitemsep,topsep=0pt,parsep=0pt,partopsep=0pt,leftmargin=*}
\setenumerate{noitemsep,topsep=0pt,parsep=0pt,partopsep=0pt,leftmargin=*}

\usepackage{caption}

\usepackage[normalem]{ulem}

\newcommand{\modelname}[1]{\texttt{#1}}

\newtheorem{definition}{Definition}[section]

\newtheorem{proposition}{Proposition}[section]
\theoremstyle{definition}
\newtheorem{example}{Example}[section]

\AtBeginEnvironment{example}{\pushQED{\hfill //}}
\AtEndEnvironment{example}{\popQED\endexample}

\usepackage{cleveref}
\crefname{section}{\S}{\S\S}
\crefname{table}{Tab.}{Tabs.}
\crefname{figure}{Fig.}{Figs.}
\crefname{algorithm}{Alg.}{Algs.}
\crefname{equation}{Eq.}{Eqs.}
\crefname{example}{Ex.}{Exs.}
\crefname{fact}{Fact}{Facts}
\crefname{appendix}{App.}{Appendices}
\crefname{theorem}{Thm.}{Thms.}
\crefname{reTheorem}{Thm.}{Thms.}
\crefname{aquestion}{Question}{Questions}
\crefname{assumption}{Assumption}{Assumptions}
\crefname{lemma}{Lem.}{Lemmas}
\crefname{reLemma}{Lem.}{Lemmas}
\crefname{proposition}{Prop.}{Props.}
\crefname{chapter}{Chapter}{Chapters}
\crefname{line}{line}{lines}
\crefname{principle}{Principle}{Principles}
\crefname{definition}{Def.}{Defs.}
\crefname{corollary}{Cor.}{Cors.}
\crefname{Exercise}{Exercise}{Exercises}
\crefformat{section}{\S#2#1#3}
\crefformat{footnote}{#2Footnote~#1#3}

\usepackage{colortbl}

\definecolor{ETHBlue}{RGB}{33,92,175}	
\definecolor{ETHGreen}{RGB}{98,115,19}		
\definecolor{ETHPurpleDark}{RGB}{140,10,89}	
\definecolor{ETHPurple}{RGB}{163,7,116}	
\definecolor{ETHGray}{RGB}{111,111,111}	
\definecolor{ETHRed}{RGB}{183,53,45}	
\definecolor{ETHPetrol}{RGB}{0,120,148}	
\definecolor{ETHBronze}{RGB}{142,103,19}	

\colorlet{ETHdarkblue}{ETHBlue!80!black}
\colorlet{ETHdarkgreen}{ETHGreen!80!black}
\colorlet{ETHpink}{ETHPurple}
\colorlet{ETHgray}{ETHGray}
\colorlet{ETHred}{ETHRed}
\colorlet{ETHgreenblue}{ETHPetrol}
\colorlet{ETHbrown}{ETHBronze}

\definecolor{TextBlack}{RGB}{51,51,51}
\definecolor{BackgroundWhite}{RGB}{255,255,255}

\definecolor{AccentBlue}{RGB}{0,122,204}
\definecolor{LightBlue}{RGB}{173,216,230}
\definecolor{DarkBlue}{RGB}{0,51,102}

\definecolor{AccentGreen}{RGB}{70,160,73}
\definecolor{LightGreen}{RGB}{144,238,144}
\definecolor{DarkGreen}{RGB}{0,100,0}

\definecolor{AccentRed}{RGB}{255,0,0}
\definecolor{LightRed}{RGB}{255,99,71}
\definecolor{DarkRed}{RGB}{139,0,0}

\definecolor{AccentOrange}{RGB}{255,165,0}
\definecolor{LightOrange}{RGB}{255,204,153}
\definecolor{DarkOrange}{RGB}{255,140,0}

\definecolor{NeutralLightGray}{RGB}{204,204,204}
\definecolor{NeutralMediumGray}{RGB}{102,102,102}

\definecolor{NoteYellow}{RGB}{255,255,0}

\definecolor{DiversePurple}{RGB}{128,0,128}
\definecolor{DiverseTeal}{RGB}{0,128,128}
\definecolor{DiverseOlive}{RGB}{128,128,0}
\definecolor{DiverseCyan}{RGB}{0,128,192}
\definecolor{DiverseMagenta}{RGB}{192,0,128}

\colorlet{MacroColor}{ETHPetrol}
\colorlet{MACROCOLOR}{MacroColor}

\usepackage{amssymb}

\newcommand{\inlineitem}[1][]{%
\ifnum\enit@type=\tw@
    {\descriptionlabel{#1}}
  \hspace{\labelsep}%
\else
  \ifnum\enit@type=\z@
       \refstepcounter{\@listctr}\fi
    \quad\@itemlabel\hspace{\labelsep}%
\fi}

\definecolor{mintgreen}{RGB}{152, 255, 152}
\makeatletter
\newcommand*\iftodonotes{\if@todonotes@disabled\expandafter\@secondoftwo\else\expandafter\@firstoftwo\fi}  %
\makeatother

\usepackage{algorithm}
\usepackage[noend]{algpseudocode}  
\algrenewcommand\algorithmicindent{1.0em}%

\newcommand{\rightcomment}[1]{{\color{gray} \(\triangleright\){\footnotesize\textit{#1}}}}
\algrenewcommand{\algorithmiccomment}[1]{\hfill \rightcomment{#1}}  
\algnewcommand{\LinesComment}[1]{\State {\color{black!50!green}\rightcomment{\parbox[t]{.95\linewidth-\leftmargin-\widthof{\(\triangleright\) }}{#1}}}}
\algnewcommand{\LineComment}[1]{\State {\color{black!50!green}\smaller \(\triangleright\) \parbox[t]{\linewidth-\leftmargin-\widthof{\(\triangleright\) }}{\it #1}\smallskip}} 
\algnewcommand{\InlineComment}[1]{\hfill {\color{black!50!green}\(\triangleright\) {\scriptsize \it #1}}}
\algrenewcommand\algorithmicindent{1.0em}%

\algrenewcommand\alglinenumber[1]{{\tiny\color{black!50}#1.}\hspace{-2pt}}
\newcommand{\algorithmicfunc}[1]{\textbf{def} {#1}:}
\algdef{SE}[FUNC]{Func}{EndFunc}[1]{\algorithmicfunc{#1}}{}
\makeatletter
\ifthenelse{\equal{\ALG@noend}{t}}%
{\algtext*{EndFunc}}
{}%
\makeatother

\usepackage{multicol}

\colorlet{MacroColor}{black}
\newcommand{\mymacro}[1]{{\color{MacroColor} #1}}

\newcommand{\defn}[1]{\textbf{#1}}
\newcommand{\defeq}{\mathrel{\stackrel{\textnormal{\tiny def}}{=}}}

\newcommand{\simplexFun}[1]{\mymacro{\triangle}^{#1}}

\newlength\myheight
\newlength\mydepth
\settototalheight\myheight{Xygp}
\newcommand*\inlinegraphicsSmall[1]{%
  \settototalheight\myheight{Xygp}%
  \settodepth\mydepth{Xygp}%
  \raisebox{0pt}[0pt][-10pt]{\includegraphics[height=0.8\myheight]{#1}}%
}

\newcommand{\kleene}[1]{{\mymacro{#1^\ast}}}
\newcommand{\alphabet}{{\mymacro{\Gamma}}}
\newcommand{\str}{{\mymacro{\boldsymbol{\gamma}}}}
\newcommand{\sym}{{\mymacro{\gamma}}}

\newcommand{\emptystr}{{\mymacro{\varepsilon}}}

\newcommand{\Rnonneg}[0]{\mymacro{\mathbb{R}_{\ge 0}}}
\newcommand{\RnonnegK}[0]{\mymacro{\mathbb{R}^{\K}_{\ge 0}}}
\newcommand{\K}[0]{\mymacro{K}}

\newcommand{\func}[0]{\mymacro{\ensuremath{f}}\xspace}

\newcommand{\ensemble}[0]{\ensuremath{\mymacro{\Phi{}}}\xspace}
\newcommand{\targetFn}[0]{\ensuremath{\mymacro{\varphi}\xspace}}

\newcommand{\targetFnPrefix}[0]{\ensuremath{\mymacro{\vec{\targetFn}}\xspace}}
\newcommand{\ensemblePrefix}[0]{\ensuremath{\mymacro{\vec{\ensemble}}\xspace}}

\newcommand{\p}{\ensuremath{\mymacro{p}}\xspace}
\newcommand{\q}{\ensuremath{\mymacro{q}}\xspace}

\newcommand{\strings}{\mymacro{\alphabet^*}}

\newcommand{\xx}{\ensuremath{\mymacro{\boldsymbol{x}}}\xspace}

\renewcommand{\Pr}[2]{\mathop{\mathbb{P}}_{\substack{#1}}\left[#2\right]}

\newcommand{\E}[0]{\mathop{\mathbb{E}}}
\newcommand{\KL}[0]{\mathrm{KL}}

\DeclareMathOperator*{\argmin}{\mymacro{argmin}}

\newcommand{\simIID}[0]{\overset{\text{\tiny{i.i.d.}}}{\sim}}

\newcommand{\proposal}[0]{{\color{black}r}}
\newcommand{\proposalPrefix}[0]{{\color{black}\vec{r}}}

\newcommand{\shapingPrefix}[0]{{\color{black}\vec{\psi}}}

\newcommand{\familyParam}{\mymacro{\tau}}

\newcommand{\pExpert}{\mymacro{p}}

\newcommand{\ptokens}{\mymacro{p_{\tokAlphabet}}}
\newcommand{\pchars}{\mymacro{p_{\charAlphabet}}}

\newcommand{\decodingFun}{\mymacro{\kappa}}
\newcommand{\tokenizer}{\mymacro{\tau}}

\makeatletter
\newcommand{\pEns}{\@ifnextchar\bgroup\pEns@with\pEns@bare}
\newcommand{\pEns@bare}{\mymacro{\p_{\mathrm{ens}}}}
\newcommand{\pEns@with}[1]{\mymacro{\p_{\mathrm{ens}, #1}}}
\makeatother

\newcommand{\tokAlphabet}{{\mymacro\Delta}}

\newcommand{\Y}{\mymacro{Y}}
\newcommand{\mychar}[0]{x}
\newcommand{\chars}{{\mymacro{\mychar}}}
\newcommand{\charseq}{{\mymacro{\boldsymbol{\mychar}}}}
\newcommand{\charAlphabet}{{\mymacro{\Sigma}}}
\newcommand{\charStrings}[0]{{\mymacro{\kleene{\charAlphabet}}}}

\newcommand{\indicator}[1]{\mathbbm{1}\{ #1 \}}

\newcommand{\prefixOp}[1]{\mymacro{\vec{{\normalcolor#1}}}}
\newcommand{\pPrefix}[0]{\prefixOp{\p}}

\newcommand{\eos}[0]{\ensuremath{\mymacro{\raisebox{-0.2pt}{$\mathsmaller{\square}$}}}\xspace}

\newcommand{\proofExplain}[1]{\text{{\color{black!50}[#1]}}}

\newcommand{\activeParticle}[0]{\mymacro{\alpha}}
\newcommand{\totalWeight}[0]{\mymacro{W}}
\newcommand{\importanceWeight}[0]{\mymacro{w}}
\newcommand{\Zest}[0]{\mymacro{\widehat{Z}}}
\newcommand{\ess}[0]{\mymacro{\widehat{M}}}
\newcommand{\tmpWeight}[0]{\mymacro{\overline{w}}}

\renewcommand{\hat}[1]{\widehat{#1}}
\newcommand{\expertPriorWeight}{\mymacro{w}}
\newcommand{\expertPriorWeightVec}{\mymacro{\boldsymbol w}}

\usepackage{ellipsis}
\makeatletter
\renewcommand*{\mathellipsis}{%
  \mathinner{%
    \kern\ellipsisbeforegap%
    {\ldotp}\kern\ellipsisgap%
    {\ldotp}\kern\ellipsisgap%
    {\ldotp}\kern\ellipsisaftergap%
  }%
}
\renewcommand*{\dotsb@}{%
  \mathinner{%
    \kern\ellipsisbeforegap%
    {\cdotp}\kern\ellipsisgap%
    {\cdotp}\kern\ellipsisgap%
    {\cdotp}\kern\ellipsisaftergap%
  }%
}
\renewcommand*{\@cdots}{%
  \mathinner{%
    \kern\ellipsisbeforegap%
    {\cdotp}\kern\ellipsisgap%
    {\cdotp}\kern\ellipsisgap%
    {\cdotp}\kern\ellipsisaftergap%
  }%
}
\renewcommand*{\ellipsis@default}{%
  \ellipsis@before
  \kern\ellipsisbeforegap
  .\kern\ellipsisgap
  .\kern\ellipsisgap
  .\kern\ellipsisgap
  \ellipsis@after\relax}
\renewcommand*{\ellipsis@centered}{%
  \ellipsis@before
  \kern\ellipsisbeforegap
  .\kern\ellipsisgap
  .\kern\ellipsisgap
  .\kern\ellipsisaftergap
  \ellipsis@after\relax}
\AtBeginDocument{%
  \DeclareRobustCommand*{\dots}{%
    \ifmmode\@xp\mdots@\else\@xp\textellipsis\fi}}
\def\ellipsisgap{-.05em}
\def\ellipsisbeforegap{-.05em}
\def\ellipsisaftergap{.05em}
\makeatother

\newcommand{\res}[2]{$#1_{\pm #2}$}

\newcommand{\win}[1]{\textbf{\boldmath #1}} 
\newcommand{\sig}{\textsuperscript{†}}

\usepackage{booktabs}
\usepackage[table]{xcolor}
\usepackage{arydshln}
\newcommand{\greydashedrule}{%
  \arrayrulecolor{gray!80}\hdashline[1pt/2pt]\arrayrulecolor{black}%
}

\newcommand{\myparagraph}[1]{\textbf{#1}\quad}

\begin{document}
\twocolumn[
\icmltitle{Ensembling Language Models with Sequential Monte Carlo}

\begin{icmlauthorlist}
\icmlauthor{Robin S.M. Chan}{eth}
\icmlauthor{Tianyu Liu}{eth}
\icmlauthor{Samuel Kiegeland}{eth,chi}
\icmlauthor{Clemente Pasti}{eth,chi}
\icmlauthor{Jacob Hoover Vigly}{chi}
\icmlauthor{Timothy J. O'Donnell}{chi,mcgill,mila,cifar}
\icmlauthor{Ryan Cotterell}{eth}
\icmlauthor{Tim Vieira}{eth}
\end{icmlauthorlist}

\icmlaffiliation{eth}{ETH Zürich}
\icmlaffiliation{chi}{CHI-FRO}
\icmlaffiliation{mcgill}{McGill University}
\icmlaffiliation{mila}{Mila}
\icmlaffiliation{cifar}{Canada CIFAR AI Chair}

\icmlcorrespondingauthor{Robin Chan}{robin.chan@inf.ethz.ch}
\icmlcorrespondingauthor{Tim Vieira}{tim.f.vieira@gmail.com}

\icmlkeywords{Machine Learning, ICML}

\vskip 0.3in
]

\printAffiliationsAndNotice{}

\begin{abstract}
Practitioners have access to an abundance of language models and prompting strategies for solving many language modeling tasks; yet prior work shows that modeling performance is highly sensitive to both choices. 
Classical machine learning ensembling techniques offer a principled approach: aggregate predictions from multiple sources to achieve better performance than any single one.
However, applying ensembling to language models during decoding is challenging: naively aggregating next-token probabilities yields samples from a locally normalized, biased approximation of the generally intractable ensemble distribution over strings.
In this work, we introduce a unified framework for composing $\K$ language models into $\func$-ensemble distributions for a wide range of functions $\func\colon\RnonnegK\to\Rnonneg$.
To sample from these distributions, we propose a byte-level sequential Monte Carlo (SMC) algorithm that operates in a shared character space, enabling ensembles of models with mismatching vocabularies and consistent sampling in the limit.
We evaluate a family of $\func$-ensembles across prompt and model combinations for various structured text generation tasks, highlighting the benefits of alternative aggregation strategies over traditional probability averaging, and showing that better posterior approximations can yield better ensemble performance.\footnote{Code: \url{https://github.com/chanr0/smc_ensembling}}
\end{abstract}

\section{Introduction}

Language models trained on different data, architectures, or objectives often exhibit complementary strengths. Even different prompts applied to the same model can surface distinct capabilities. 
The successes of ensemble methods in classical machine learning motivate the application of various prediction aggregation strategies to combine the strengths of individual models into a single, better predictor. For instance, probability averaging in the traditional ensembling sense reduces variance \cite{jordan1994mixtures}, the product of experts \cite{hinton1999products} concentrates probability mass on regions of agreement, and other operations induce completely different behaviors \cite{kittler1998combining}.
Sampling from such ensemble distributions has practical applications in language modeling.
For instance, \emph{probability averaging} is most widely used to reduce variance and marginalize over individual model errors \citep[see, \textit{inter alia},][]{liu2024coolfusionfuselargelanguage, yao2025determinethenensemble}; \emph{contrastive} operators can be employed to ``steer'' generation, amplifying desirable traits while suppressing toxic or hallucinatory outputs \cite{liu2021dexperts, li-etal-2023-contrastive, dekoninck2024arithmetic}; finally, the \emph{product} between a language model and a constraint is often used for controlled generation, when the goal is to restrict output to the intersection of hard logical or syntactic constraints \cite{lipkin2025fast, loula2025syntactic}.

\begin{figure*}[ht!]
\centering
\includegraphics[width=0.75\linewidth]{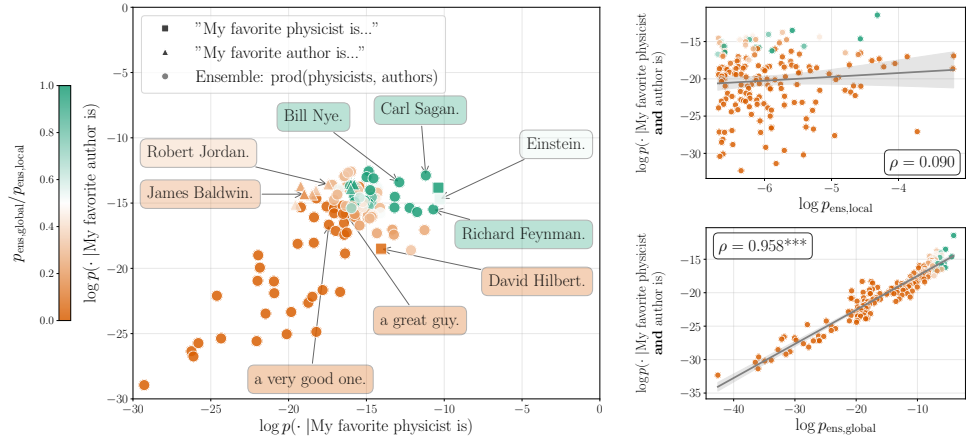}
\caption{Prompt intersection on GPT-2. We generate 200 strings from the normalized token-level product of ``\textit{My favorite physicist is}'' and ``\textit{My favorite author is}'' prompts, then score each under the local ensemble $p_{\mathrm{ens, local}}$ (string probabilities emerging from \textit{locally normalized} token-level product ensembling) and global ensemble $p_{\mathrm{ens, global}}$ (normalized product of string probabilities). Left: scatter plot of log probabilities with color indicating the generated strings' global probability relative to their local ones. We also plot completions from the author prompt (\protect\inlinegraphicsSmall{figs/Triangle.pdf}) and the physicist prompt (\protect\inlinegraphicsSmall{figs/Rectangle.pdf}) that don't overlap with the ensemble's completions (\protect\inlinegraphicsSmall{figs/Circle.pdf}). Right: Correlation with the explicitly formulated intersection constraint probability $p(\cdot \mid \text{``\textit{My favorite physicist \textbf{and} author is}''})$.}
\label{fig:prompt-intersection}
\end{figure*}

Applying these strategies to language models is not straightforward, as they define distributions over \textit{sequences} of symbols, or \textit{strings}, and generate these autoregressively.
One commonly used strategy is to apply ensembling \textit{during} decoding by aggregating next-token distributions at each generation step.
Most prior work in this space has focused on the \emph{vocabulary alignment problem}---combining models with mismatching tokenization---proposing various heuristics such as union vocabularies \citep{yu2024breaking, yao2025determinethenensemble}, or shared embedding spaces \citep{huang2024ensemble, xu2024bridging}.
Once token alignment is achieved, these methods predominantly default to (weighted) token probability averaging (cf. \Cref{sec:related-work}), sampling from locally normalized, biased approximations of the true global ensemble distribution over strings \cite{park2024grammaraligned}.
In this work, we sidestep the vocabulary alignment problem by mapping language models to the character level, which allows us to focus on different questions: beyond probability averaging, what ensembling strategies are effective, and what do we gain by consistently sampling from the \textit{global} ensemble distribution over \textit{strings} rather than local approximations?

We address these questions by introducing $\func$-ensembles, a unified framework for composing $\K$ language models with a wide range of ensembling functions $\func\colon \RnonnegK \to \Rnonneg$.
For inference, we propose a character-level sequential Monte Carlo algorithm, which enables the rigorous combination of models with mismatching tokenization and allows consistent sampling from the global $\func$-ensemble distribution (i.e., in the limit).
Our main empirical contribution is to evaluate and compare a diverse range of $\func$-ensembles within the 
 family of \emph{generalized means} on a range of structured text generation tasks.
We evaluate within-model prompt ensembling, as well as ensembles across model families.
Our results demonstrate the efficacy of ensembles for improving base model performance and show a significant effect of the choice of ensembling function on task performance.
Namely, we find that \textit{consensus-seeking} strategies, such as the product, consistently outperform local probability averaging.
Finally, we show the utility of our SMC ensembling algorithm by highlighting the performance improvements driven by better approximations of the global ensemble posterior.

\section{Motivation: Global Prompt Intersection}

We first motivate the utility of ensembling language models---and sampling from good ensemble posterior approximations---with a \textit{prompt intersection} experiment adapted from \citet{lew2023sequential}.\looseness=-1

\myparagraph{Setup.} Consider a language model prompted with two prefixes with overlapping plausible completions.
A practitioner may be interested in sampling strings that are probable under both prompts, i.e., sampling from their intersection.
For illustrative purposes of this example, we prompt GPT-2 \cite{radford2019language} with ``\textit{My favorite physicist is}'' and ``\textit{My favorite author is}'' and aim to synthesize an ensemble that places high probability on physicists who are also authors. To that end, we generate 200 completions ending in a period using best-first beam search (beam width 20), scoring partial sequences by the cumulative, locally normalized product of next-token probabilities.

\myparagraph{String Scoring.} We compare two ways of scoring completions under the ensemble: (1) $p_{\mathrm{ens, local}}$, which multiplies the token-level probabilities from each prompt at every generation step, then normalizes; and (2) $p_{\mathrm{ens, global}}$, which first computes each prompt's probability for the entire completed string, then multiplies these string-level probabilities. We normalize $p_{\mathrm{ens, global}}$ across the 200 generated completions.\footnote{The true marginal likelihood is intractable, as it requires an infinite sum over all possible strings. However, these approximate probabilities only differ from the true ones by a constant.}

\myparagraph{Results.} In \autoref{fig:prompt-intersection}, we plot the 200 ensemble completions (\protect\inlinegraphicsSmall{figs/Circle.pdf}) alongside completions generated from each prompt individually (\protect\inlinegraphicsSmall{figs/Triangle.pdf} for authors, \protect\inlinegraphicsSmall{figs/Rectangle.pdf} for physicists).  
We see that the completions generated from the locally normalized product ensemble tend to be ones that are assigned high probability under both prompts, rather than those probable under only one prompt (e.g., ``\textit{James Baldwin.}'', a probable completion for the author prompt, is improbable under the physicist prompt, making it also low-probability under the ensemble). However, when comparing the probabilities assigned by the local ensemble with the global one, we observe a systematic mismatch: the local ensemble places relatively more probability mass on globally low-probability continuations, such as ``\textit{a great guy.}'', and relatively less probability on globally high-probability strings, such as ``\textit{Carl Sagan.}''. This occurs because the local product rewards tokens that are individually probable under both prompts at each step---favoring generic continuations like ``\textit{a}'' or ``\textit{my}'' early on---rather than rewarding full completions that are probable for both prompts. A completion like ``\textit{a great guy.}'' accumulates high local scores despite being improbable under the string-level global ensemble.

We validate this observation by scoring the generated completions under a prompt that explicitly encodes the intersection constraint: ``\textit{My favorite physicist \textbf{and} author is}''. The global ensemble probabilities strongly correlate with this explicit constraint ($\rho = 0.958$, $p < 0.001$), while the local ensemble does not ($\rho = 0.090$, $p = 0.24$).

\myparagraph{Insights.} 
This example illustrates two key points. First, global ensembling over strings can yield markedly different behavior than local token-level ensembling. In this example, the global product ensemble yields a significantly better approximation of the intended intersection constraint. Second, the product ensemble captures a meaningful property: concentrating probability on strings that are probable under both language models. However, it is only one aggregation strategy in the vast design space of aggregation functions that induce a variety of behaviors. 
In combination, these findings raise the question: Across the spectrum of behaviors induced by different ensembling functions $\func$, how can we sample from their global distributions over strings?

\section{Background: Language Models}\label{sec:background}

An \defn{alphabet} $\alphabet$ is a finite, non-empty set of symbols. Let $\strings$ denote the set of all strings $\str$ formed from the symbols in $\alphabet$, including the empty string $\emptystr$. We write $|\str|$ to denote the length of $\str$.
A \defn{language model} $\p$ is a probability distribution over  $\strings$. We write $\simplexFun{\strings}$ to denote the set of valid probability distributions (language models) over $\strings$. 

Let $\str' \succeq \str$ denote that $\str$ is a prefix of $\str'$.
The \defn{prefix probability} $\pPrefix(\str)$ is the probability that a string drawn from $\p$ begins with $\str\in\strings$:
\begin{equation}
\pPrefix(\str) \defeq \sum_{\str' \in \strings} \p(\str\,\str')
\label{eq:prefix-prob}
\end{equation}
For every $\str,\str' \in \strings$ where $\pPrefix(\str)>0$, we define the \defn{conditional prefix probability}
\begin{align}
\pPrefix(\str' \mid \str) \defeq 
\begin{cases}
\displaystyle\frac{\p(\str)}{\pPrefix(\str)} & \text{if } \str' = \eos \\[1em]
\displaystyle\frac{\pPrefix(\str\,\str')}{\pPrefix(\str)} & \text{otherwise}
\end{cases}
\label{eqn:conditional-prefix}
\end{align}
where \eos denotes a distinguished \defn{end-of-string} delimiter.\footnote{Note that $\eos\notin\alphabet$, and it is an error to append \eos to any string.} Note that $\pPrefix(\cdot \mid \str )$ describes a probability distribution over $\alphabet \cup \{\eos\}$.
\noindent Then, the probability of $\str$ under language model $\p$ may be factorized as
\begin{align}
\p(\str) &= \pPrefix(\eos \mid \str) \underbrace{\prod_{t=1}^{|\str|} \pPrefix(\sym_t \mid \str_{< t})}_{= \pPrefix(\str)}
\label{eq:factored-lm}
\end{align}
This factorization corresponds to the generative process used by autoregressive language models,
generating a string by iteratively appending samples from next-token distributions and stopping when $\eos$ is reached.

\section{Language Model $\func$-Ensembles}
Ensembling is a technique for combining \emph{potential functions} (which may or may not be probability distributions) into a probability distribution that reflects some aspect of each. In particular,
given a set of potentials
 $\p_{1},\ldots,\p_{\K}: \charStrings \rightarrow \Rnonneg$, defined over the character alphabet $\charAlphabet$,\footnote{Note that, when $\{\p_{k}\}_{k=1}^{K}$ are defined over different token alphabets , a common issue is \textit{tokenization mismatch}, i.e., when two potentials do not share the same alphabet, token level probabilities cannot be aggregated directly. We address this by mapping each model to a shared probability space, defined on the common character alphabet $\charAlphabet$. This problem was discussed in detail by \citet{vieira2025from}, for further details regarding the specific setting of this paper, see \cref{sec:tokenization}.} and the \emph{ensemble function} $\func: \RnonnegK \rightarrow \mathbb{R}$, we define the $\func$-\emph{ensemble} as follows

\begin{definition}\label{def:fensemble}
The \defn{$\func$-ensemble} of potentials $\p_1,\ldots,\p_K \in \Rnonneg^{\charStrings}$ is a  distribution $\ensemble \in \simplexFun{\charStrings}$ defined as
\begin{align*}
\ensemble(\charseq) &\defeq \frac{\func(\p_{1}(\charseq), \ldots ,\p_{K}(\charseq))}{Z} 
\\
\quad Z &\defeq  \sum_{\charseq' \in \charStrings} \func(\p_{1}(\charseq'),\ldots, \p_{K}(\charseq')) 
\end{align*}
for every $\func: \RnonnegK \rightarrow \mathbb{R}$, such that $0< Z< \infty$. We further define $\targetFn(\charseq) \defeq \func(\p_{1}(\charseq), \ldots ,\p_{K}(\charseq))$ as the \textbf{unnormalized} ensemble distribution.
\end{definition}
Since $\ensemble$ is a probability distribution, the prefix probability $\ensemblePrefix$ and the conditional prefix probability $\ensemblePrefix(\charseq' \mid \charseq)$ follow directly from \cref{eq:prefix-prob}. Ideally, to sample from \ensemble we would iteratively sample from the conditional next-symbol probability $\ensemblePrefix( \chars_{t} \mid \xx_{<t})$, with $\chars_{t}\!\in\!(\charAlphabet\!\cup\!\eos)$, until the end of sequence delimiter $\eos$ is hit.
However, sampling directly from $\ensemblePrefix( \chars_{t} \mid \xx_{<t})$ is often unfeasible as computing $Z$ is intractable. Therefore, in \cref{sec:approx-inference} we explore various strategies to approximately sample from $\ensemblePrefix(\chars_{t} \mid \xx_{<t})$.
\begin{table}[th!]
\centering
\caption{The generalized means family of $\func$-ensembles. The ensemble $\ensemble$ minimizes $\sum_{k=1}^K w_k D_\alpha(\ensemble \mid \mid \p_k)$. The divergences are defined as: Pearson $\chi^2$: $D_{\chi^2}(\p\mid \mid\q) \defeq \sum_{\charseq \in \charStrings} \frac{(\p(\charseq) - \q(\charseq))^2}{\q(\charseq)}$; KL: $D_\mathrm{KL}(\p\mid\mid\q) \defeq \sum_{\charseq \in \charStrings} \p(\charseq) \log \frac{\p(\charseq)}{\q(\charseq)}$; and Rényi-$\infty$: $D_\infty(\p \mid \mid \q) \defeq \log \max_{\charseq \in \charStrings} \frac{\p(\charseq)}{\q(\charseq)}$.
}
\label{tab:f-ensembles}
\footnotesize
\setlength{\tabcolsep}{3pt}
\begin{tabular}{lllll}
\toprule
& $\tau$ & Name & $\func$-Ensemble & Minimized $D_\alpha$\\
\midrule
& $-\infty$ & Minimum & $\frac{1}{Z}\min_{k=1}^K \p_k(\charseq)$ & $D_\infty(\ensemble \mid \mid \p_k)$\\
\addlinespace
\multirow{4}{*}[1.5em]{\rotatebox{90}{\textcolor{gray}{\textit{consensus}}}} 
& $-1$ & Harmonic & $\frac{1}{Z}\left(\sum_{k=1}^K \frac{w_k}{\p_k(\charseq)}\right)^{-1}$ & $D_{\chi^2}(\ensemble \mid \mid \p_k)$\\
\addlinespace
& $\phantom{-}0$ & Product & $\frac{1}{Z}\prod_{k=1}^K \p_k(\charseq)^{w_k}$ & $D_\mathrm{KL}(\ensemble \mid \mid \p_k)$\\
\addlinespace[0.5em]
\greydashedrule
\addlinespace[0.5em]
& $\phantom{-}1$ & Sum & $\frac{1}{Z}\sum_{k=1}^K w_k \p_k(\charseq)$ & $D_\mathrm{KL}(\p_k\mid \mid \ensemble)$\\
\addlinespace
\multirow{-4}{*}[-1.5em]{\rotatebox{90}{\textcolor{gray}{\textit{coverage}}}} 
& $\phantom{-}2$ & Quadratic & $\frac{1}{Z}\sqrt{\sum_{k=1}^K w_k \p_k(\charseq)^2}$ & $ D_{\chi^2}(\p_k \mid \mid \ensemble)$\\
\addlinespace
& $+\infty$ & Maximum & $\frac{1}{Z}\max_{k=1}^K \p_k(\charseq)$ & $D_\infty(\p_k \mid \mid \ensemble)$ \\
\bottomrule
\end{tabular}
\end{table}

\subsection{A Family of $\func$-Ensembles}
\autoref{def:fensemble} admits a wide range of aggregation functions $\func$---but which should we choose? We show that the \textit{family of generalized means} arises naturally from a variational principle: they are the unique minimizers of weighted sums of $\alpha$-divergences to the base potentials.

Specifically, consider $K$ expert potentials $\{\pExpert_k\}_{k=1}^K$, where 
$\pExpert_k : \charStrings \to \Rnonneg$. 
Our goal is to construct a single ensemble distribution $\ensemble$ that synthesizes the information provided by the $K$ experts, given prior expert weights $\expertPriorWeightVec \in \simplexFun{K}$.
Following prior work by \citet{dekoninck2024arithmetic}, we define the \emph{optimal} consensus distribution $\ensemble^* (\charseq)$ minimizing a weighted sum of divergences $D$, i.e.,
\begin{equation}\label{eq:variational-ensembling}
    \ensemble^* \defeq \argmin_{\ensemble\in \simplexFun{\charStrings}} \sum_{k=1}^K \expertPriorWeight_k D(\ensemble \mid \mid \pExpert_k)
\end{equation}

The choice of $D$ determines how similarity between distributions is assessed and, therefore, influences which ensemble distribution is considered optimal. In this work, we discuss $D$ within the family of $\alpha$-divergences \cite{amari1985differential}:
\begin{equation}\label{eq:alpha-divergences}
    D_\alpha (\q \mid \mid \p) = \frac{1}{\alpha(1-\alpha)} \left(1- \sum_{\charseq\in \charStrings} \q(\charseq)^\alpha \p(\charseq)^{1-\alpha}\right)
\end{equation}

\begin{restatable}[$\alpha$-Divergence Ensembles]{theorem}{alphaEnsembles}
\label{thm:alpha_ensembles}
Let $D_\alpha$ be the $\alpha$-divergence with parameter $\alpha \in \mathbb{R} \setminus \{0, 1\}$. The unique consensus distribution $\ensemble^*$ that minimizes the expert-weighted loss in \Cref{eq:variational-ensembling} is the \textbf{generalized mean} of the experts, with power parameter $\tau \defeq 1 - \alpha$:
\begin{equation}\label{eq:generalized-means}
\ensemble^* (\charseq) = \frac{1}{Z} \left( \sum_{k=1}^K \expertPriorWeight_k \pExpert_k(\charseq)^{\tau} \right)^{\frac{1}{\tau}}
\end{equation}
where $Z$ is the normalizer ensuring $\sum_{\charseq\in \charStrings} \ensemble^*(\charseq) = 1$.
\end{restatable}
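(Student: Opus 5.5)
The strategy is to minimize the objective $\mathcal{L}(\ensemble) \defeq \sum_{k=1}^K \expertPriorWeight_k D_\alpha(\ensemble \mid\mid \pExpert_k)$ directly, working pointwise in $\charseq$ over the simplex $\simplexFun{\charStrings}$. By \Cref{eq:alpha-divergences} and $\sum_k \expertPriorWeight_k = 1$,
\begin{equation*}
\mathcal{L}(\ensemble) = \frac{1}{\alpha(1-\alpha)}\Bigl(1 - \sum_{\charseq\in\charStrings} \ensemble(\charseq)^{\alpha}\, g(\charseq)\Bigr), \qquad g(\charseq) \defeq \sum_{k=1}^K \expertPriorWeight_k \pExpert_k(\charseq)^{1-\alpha}.
\end{equation*}
The problem therefore reduces to a single functional $J(\ensemble) \defeq \sum_{\charseq} \ensemble(\charseq)^\alpha g(\charseq)$. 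When $\alpha \in (0,1)$ the prefactor is positive, so we maximize $J$. When $\alpha<0$ or $\alpha>1$ it is negative, so we minimize $J$.

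The key step is a Hölder-type inequality that pins down the optimum and its uniqueness without any Lagrangian calculus. Set $\tau = 1-\alpha$ and $h \defeq g^{1/\tau}$, so that $g = h^{1-\alpha}$ and $J(\ensemble) = \sum_{\charseq} \ensemble(\charseq)^\alpha h(\charseq)^{1-\alpha}$. Let $Z = \sum_{\charseq} h(\charseq)$, assumed finite and positive as in \Cref{def:fensemble}, and let $\q \defeq h/Z$. Then $J(\ensemble) = Z^{1-\alpha}\sum_{\charseq}\ensemble^\alpha \q^{1-\alpha}$. The sum $\sum \ensemble^\alpha \q^{1-\alpha}$ is the $\alpha$-affinity between two probability distributions. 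By weighted AM--GM / Hölder it is at most $1$ for $\alpha\in(0,1)$. By the reverse Hölder inequality, or Jensen applied to the convex map $t\mapsto t^\alpha$ under $\q$, it is at least $1$ for $\alpha\notin[0,1]$. Equality holds in either case iff $\ensemble = \q$. Equivalently, this is the nonnegativity of $D_\alpha(\ensemble\mid\mid\q)$ with equality iff $\ensemble=\q$.

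Hence $\mathcal{L}(\ensemble) = \frac{1}{\alpha(1-\alpha)}\bigl(1 - Z^{1-\alpha}\bigr) + Z^{1-\alpha} D_\alpha(\ensemble\mid\mid\q)$. This identity is checked by expanding $D_\alpha(\ensemble\mid\mid \q)$. It shows the minimizer is exactly $\q = \frac1Z\bigl(\sum_k \expertPriorWeight_k \pExpert_k^\tau\bigr)^{1/\tau}$, which is \Cref{eq:generalized-means}, and that it is unique because $D_\alpha(\cdot\mid\mid\q)$ vanishes only at $\q$.

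The main obstacle is the set of boundary and edge cases, not the algebra. For $\alpha>1$ (so $\tau<0$) or $\alpha<0$, some $\pExpert_k(\charseq)$ or $\ensemble(\charseq)$ may be zero, which makes $0^{\tau}$ or $0^{\alpha}$ infinite. I will adopt the conventions $0^{\tau}=\infty$ and $\infty^{1/\tau}=0$ for $\tau<0$, and $0^\alpha = \infty$ for $\alpha<0$. Under these conventions the $\alpha$-divergence forces $\ensemble$ to vanish where $h$ does, and the equality case of Jensen remains valid on the support of $\q$. The other delicate point is that $\charStrings$ is countably infinite. To handle it, I will state the Jensen/Hölder step for countable sums (monotone convergence suffices, since all terms are nonnegative) and assume $0<Z<\infty$, as in \Cref{def:fensemble}.
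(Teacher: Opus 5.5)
Your proof is correct, but it takes a different route from the paper. The paper writes a Lagrangian with only the normalization constraint and differentiates in each coordinate $\ensemble(\charseq)$. It then solves the first-order condition $\ensemble(\charseq)^{\alpha-1} = \nu(1-\alpha)\,g(\charseq)^{-1}$ to read off $\ensemble \propto g^{1/\tau}$, and normalizes. That derivation identifies a stationary point, but it leaves three things open:
\begin{itemize}
\item It does not show the stationary point is a minimizer. The prefactor $\frac{1}{\alpha(1-\alpha)}$ is positive for $\alpha\in(0,1)$ and negative otherwise, so a convexity argument is needed and never given.
\item It does not prove the uniqueness the statement claims.
\item It tacitly assumes every $\ensemble(\charseq)$ is strictly positive, and that formal stationarity over countably many coordinates suffices.
\end{itemize}
Your identity $\mathcal{L}(\ensemble) = \frac{1}{\alpha(1-\alpha)}\bigl(1 - Z^{1-\alpha}\bigr) + Z^{1-\alpha} D_\alpha(\ensemble \,\|\, q)$ checks out. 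It also holds in the extended reals when the affinity diverges for $\alpha\notin[0,1]$. It replaces the calculus with the Gibbs-type inequality $D_\alpha(\ensemble\,\|\,q)\ge 0$, with equality iff $\ensemble=q$. Since $Z^{1-\alpha}>0$ in every regime, global optimality and uniqueness follow at once, uniformly in the sign of the prefactor. You also get the optimal value $\frac{1}{\alpha(1-\alpha)}\bigl(1-Z^{1-\alpha}\bigr)$ for free. Your route uses normalization only of $\ensemble$ and $q$, so it applies verbatim to unnormalized potentials $\pExpert_k$, which the paper's setup allows.

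The paper's calculus is shorter and shows where the generalized mean comes from. Making it rigorous would require exactly what you supply: strict convexity of the objective in $\ensemble$ and a treatment of the zero-probability boundary. One small addition to your list of edge-case conventions: you also need $0\cdot\infty=0$. This arises, for instance, when $\ensemble(\charseq)=0$ meets $g(\charseq)=\infty$ for $\alpha>1$, or when some $\expertPriorWeight_k=0$ multiplies $\pExpert_k(\charseq)^{\tau}=\infty$.
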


\begin{proof}
In \autoref{sec:proofs}.
\end{proof}

\myparagraph{Qualitative analysis of generalized mean ensembles.}
Generalized means unify many practically used model aggregation strategies under a single framework, recovering product of experts \citep{hinton1999products,hinton2002training} as $\tau \rightarrow 0$, mixture of experts \citep{jordan1994mixtures} as $\tau \rightarrow 1$, and $\min$ and $\max$ aggregations \citep{kittler1998combining} as $\tau \rightarrow \pm \infty$.
Deriving them as minimizers of weighted sums of $\alpha$-divergences further explains how these methods mediate disagreements between experts, and provides a qualitative guide for choosing $\func$ (cf. \Cref{tab:f-ensembles}).
The regime where $\tau \le 0$ exhibits \textit{consensus-seeking} behavior (where $\p_{k}(\charseq) = 0 \to \ensemble(\charseq) = 0$), which forces the ensemble to concentrate probability mass on the intersection of the expert supports. 
In contrast, the regime where $\tau \ge 1$ exhibits \textit{coverage-seeking} behavior (where $\p_{k}(\charseq) > 0 \to \ensemble(\charseq) > 0$), forcing the ensemble to distribute mass over the union of the expert supports. 
In both regimes, $|\tau|$ modulates the effect's strength.

\section{Approximate Inference}\label{sec:approx-inference}

We now describe how to (approximately) sample from any $\func$-ensemble distribution $\ensemble$. The strategies we present are different instantiations of \emph{importance sampling} \citep{Casella-MC}. In general, we will approximately sample from $\ensemble$---which is generally intractable---by sampling from a tractable \emph{proposal} distribution and appropriately correcting the resulting distortion with an \emph{importance weight}. Adopting the importance sampling terminology, we will refer to $\ensemble$ as the \emph{target distribution}, and to $\targetFn$ as the \emph{unnormalized target}.

\subsection{Importance Sampling}
Importance sampling is a general-purpose technique for approximating intractable distributions. We now present a version adapted to the special case of estimating $\func$-ensembles:

\begin{definition}
\label{def:importance-sampling}

\defn{Importance sampling} works as follows: Given a sample budget $M>0$,\footnote{\citet{chatterjee2018importance} show that the number of samples required for importance sampling with proposal distribution $\proposal$ to obtain a \emph{good} approximation of the target distribution $\ensemble$ is exponential in the KL divergence $\KL(\ensemble \| \proposal)$.} we sample $\charseq^{(1)}, \ldots, \charseq^{(M)} \simIID \proposal$ from a \defn{proposal distribution} $\proposal \in \simplexFun{\charStrings}$.\footnote{\label{absolute-continuity}For technical reasons, we require $\proposal$ to satisfy $\targetFn(\charseq) > 0 \Longrightarrow \ensemble(\charseq) > 0$ for all $\charseq \in \charStrings$, or equivalently (by contrapositive) $\ensemble(\charseq) = 0 \Longrightarrow \targetFn(\charseq) = 0$ for all $\charseq \in \charStrings$. We provide a discussion about the implications in \Cref{app:absolute-continuity-f}.}
We associate an \defn{importance weight} with each $\charseq^{(m)}$, i.e., $\importanceWeight^{(m)} \defeq \targetFn(\charseq^{(m)})/\proposal(\charseq^{(m)})$. 
We then define the \defn{estimated unnormalized target}:
\begin{align}
\widehat{\targetFn}(\charseq) \defeq \frac{1}{M} \sum_{m=1}^M w^{(m)} \indicator{\charseq = \charseq^{(m)}}
\label{eq:estimate-target}
\end{align}
The \defn{estimated target distribution} alongside the \defn{estimated normalization constant} are given as:
\begin{align}
\widehat{\ensemble}(\charseq) \defeq \frac{\widehat{\targetFn}(\charseq)}{\widehat{Z}}, \qquad \widehat{Z} &\defeq \frac{1}{M}\sum_{m=1}^M w^{(m)}
\label{def:estimate-ensemble}
\end{align}
\end{definition}

Crucially, importance sampling provides an \emph{unbiased} estimate $\widehat{Z}$ of the normalizing constant $Z$, and an estimate $\widehat{\ensemble}$ of the ensemble distribution $\ensemble$ that is \emph{consistent} (but biased for finite $M$). See \cref{app:importance-sampling} for more details and proofs of correctness. Importance sampling of complete strings is not well-suited to an autoregressive language model, which is inherently a sequential model. For this reason, we introduce sequential importance sampling.

\subsection{Sequential Importance Sampling}\label{sec:sis}

Sequential importance sampling \citep[SIS;][]{del-moral} is a variant of importance sampling designed for distributions over sequences, where samples are built incrementally by extending partial sequences according to a proposal kernel. In language modeling, this kernel is typically the next-token distribution (cf. \cref{eq:factored-lm}), which extends a partial string by one symbol at each step. At each time step, the importance weights are updated, yielding in the end an importance weight for each \emph{particle} as in \cref{def:importance-sampling}.\footnote{Samples are often called \emph{particles} in SIS.\looseness=-1}
In order to compute the weight update, we need to have access to the \emph{next token prefix target}: 
\begin{align}
\targetFnPrefix(\chars' \mid \charseq) \defeq \begin{cases}
\frac{\targetFn(\charseq)}{\targetFnPrefix(\charseq)} & \text{if}\,\,  \chars' = \eos \\
\frac{\targetFnPrefix(\charseq\, \chars')}{\targetFnPrefix(\charseq)} & \chars' \in \charAlphabet
\end{cases}
\label{eq:ensemble-conditional}
\end{align}    
where $\targetFnPrefix(\charseq)\defeq\ensemblePrefix(\charseq) \cdot Z$.
Unfortunately, $\targetFnPrefix$ is itself intractable.\footnote{It would require computing an infinite sum $\sum_{\charseq' \in \charStrings}\func(\p_1(\charseq\, \charseq'),\ldots,\p_{K}(\charseq\, \charseq'))$.} Therefore, we introduce a \defn{shaping function} $\shapingPrefix\colon \charStrings \to \Rnonneg$ as an \emph{intermediate target} to compute the importance weights. A common choice for  $\shapingPrefix(\charseq)$ is $\func(\pPrefix_1(\charseq),\ldots, \pPrefix_{K}(\charseq))$, which, notably, is not the same as $\targetFnPrefix(\charseq)$. Intuitively, the shaping function serves as a tractable surrogate for the intractable prefix target, guiding the sampler towards promising regions of $\charStrings$. 

To compute the importance weight update for SIS, we need to define a next-token version of the shaping function:\footnote{Note that $\shapingPrefix(\cdot \mid \charseq)$ is not necessarily a probability distribution.}
\begin{align}
\shapingPrefix(\chars' \mid \charseq) \defeq \begin{cases}
\frac{\targetFn(\charseq)}{\shapingPrefix(\charseq)} &\textbf{if } \chars' = \eos\\
\frac{\shapingPrefix(\charseq\,\chars')}{\shapingPrefix(\charseq)}    & \textbf{otherwise}
\end{cases}
\label{eq:conditional-shaping}
\end{align}
Importantly, this definition gives the following factorization of $\targetFn$, $\forall \charseq \in \charStrings$ (which preserves the importance weights); 
\begin{align}
\targetFn(\charseq) = \shapingPrefix(\emptystr) \shapingPrefix(\eos \mid \charseq) \prod_{t=1}^{|\charseq|} \shapingPrefix(\chars_t \mid \charseq_{< t})
\label{eq:factored-potential}
\end{align}
\begin{proposition}
\label{prop:shaped-weights}
The shaping weights are equivalent to those used in importance sampling:
\begin{align}
\frac{\targetFn(\charseq)}{\proposal(\charseq)} 
&= \shapingPrefix(\emptystr) \frac{\shapingPrefix(\eos \mid \charseq)}{\proposalPrefix(\eos \mid \charseq)} \prod_{t=1}^{|\charseq|} \frac{\shapingPrefix(\chars_t \mid \charseq_{< t})}{\proposalPrefix( \chars_t\mid\charseq_{< t}) }
\end{align}
\end{proposition}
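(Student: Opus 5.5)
The plan is to combine two telescoping factorizations: the factorization of the unnormalized target in \cref{eq:factored-potential}, and the autoregressive factorization of the proposal from \cref{eq:factored-lm}. Dividing one by the other, factor by factor, gives the claimed identity.

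First, I would verify \cref{eq:factored-potential} directly from the definition of the conditional shaping function in \cref{eq:conditional-shaping}. For a string $\charseq$ with $|\charseq| = T$, the product $\prod_{t=1}^{T} \shapingPrefix(\chars_t \mid \charseq_{<t}) = \prod_{t=1}^{T} \shapingPrefix(\charseq_{\le t})/\shapingPrefix(\charseq_{<t})$ telescopes to $\shapingPrefix(\charseq)/\shapingPrefix(\emptystr)$. Multiplying by $\shapingPrefix(\emptystr)$ and by $\shapingPrefix(\eos \mid \charseq) = \targetFn(\charseq)/\shapingPrefix(\charseq)$ then yields exactly $\targetFn(\charseq)$.

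Second, I would apply the same reasoning to the proposal. Viewing $\proposal$ as a language model over $\charStrings$, \cref{eq:factored-lm} gives
\[
\proposal(\charseq) = \proposalPrefix(\eos \mid \charseq) \prod_{t=1}^{|\charseq|} \proposalPrefix(\chars_t \mid \charseq_{<t}),
\]
where $\proposalPrefix(\emptystr) = 1$ because the prefix probability of the empty string is the total mass of $\proposal$. Dividing the target factorization by this one term by term produces the right-hand side of the proposition. The factor $\shapingPrefix(\emptystr)$ has no counterpart in the denominator, so it remains out front.

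The main obstacle is well-definedness, that is, avoiding division by zero. The identity only needs to hold where the importance weight is actually evaluated, namely for strings with $\proposal(\charseq) > 0$. On that set, every conditional $\proposalPrefix(\cdot \mid \charseq_{<t})$ along the path is positive, since each is a ratio of positive prefix probabilities. On the shaping side, I would assume $\shapingPrefix(\charseq_{\le t}) > 0$ along the path so that the ratios in \cref{eq:conditional-shaping} are defined. If instead some shaping prefix vanishes, I would handle it by convention: with $0/0 := 0$, the product is zero, and we need $\targetFn(\charseq) = 0$ as well for the identity to survive. I would either state positivity of $\shapingPrefix$ on prefixes of the support of $\proposal$ as a standing assumption, or defer this point to the absolute-continuity discussion in \cref{absolute-continuity}. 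Apart from this technicality, the argument is just the two telescoping products.
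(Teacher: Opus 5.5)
Your proposal is correct and is essentially the paper's proof: the paper applies \cref{eq:factored-potential} to $\targetFn$ and \cref{eq:factored-lm} to $\proposal$, then divides factor by factor. You additionally check the telescoping behind \cref{eq:factored-potential}, note $\proposalPrefix(\emptystr)=1$, and handle zero denominators, which the paper leaves implicit; for the generalized-mean shaping that last point follows from annihilativity (\cref{prop:annihilative-f}) and from prefix probabilities being nonincreasing along a path.
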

\begin{proof}
\Cref{prop:shaped-weights} follows directly from \cref{eq:factored-potential} and \cref{eq:factored-lm} applied to $\proposal(\charseq)$, followed by algebra.
\end{proof}

Interestingly, it turns out that the optimal choice for the proposal distribution is proportional to the shaping itself:
\begin{restatable}{proposition}{optimalProposal}\label{prop:optimal-proposal}
The locally optimal proposal distribution
i.e., the one that minimizes the variance of the importance weight update at each step, is $\proposalPrefix^*(\chars' \mid \charseq) \propto \shapingPrefix(\chars' \mid \charseq)$. 
\end{restatable}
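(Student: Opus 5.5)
We fix a prefix $\charseq$ and study the random incremental weight $u(\chars') \defeq \shapingPrefix(\chars' \mid \charseq)/\proposalPrefix(\chars' \mid \charseq)$, where $\chars' \sim \proposalPrefix(\cdot \mid \charseq)$ ranges over $\charAlphabet \cup \{\eos\}$. By \Cref{prop:shaped-weights}, the full importance weight factorizes into these per-step ratios. The weight accumulated on $\charseq$ is fixed by the time we extend it, so the variance contributed at this step is $\mathrm{Var}_{\proposalPrefix}[u]$, conditioned on $\charseq$. The plan is to minimize this conditional variance over all distributions $\proposalPrefix(\cdot \mid \charseq)$ on the finite set $\charAlphabet \cup \{\eos\}$. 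The only constraint we impose is absolute continuity: the proposal must put positive mass wherever $\shapingPrefix(\cdot \mid \charseq) > 0$, as in \cref{absolute-continuity}.

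First we compute the two moments. The mean is
\[
\E_{\proposalPrefix}[u] = \sum_{\chars'} \shapingPrefix(\chars' \mid \charseq) \defeq S(\charseq),
\]
which does not depend on the proposal. The second moment is $\E[u^2] = \sum_{\chars'} \shapingPrefix(\chars' \mid \charseq)^2/\proposalPrefix(\chars' \mid \charseq)$. Minimizing the variance is therefore the same as minimizing the second moment.

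Next we bound the second moment by Cauchy--Schwarz, writing $\shapingPrefix = (\shapingPrefix/\sqrt{\proposalPrefix})\sqrt{\proposalPrefix}$:
\[
S(\charseq)^2 = \Big(\sum_{\chars'} \shapingPrefix(\chars' \mid \charseq)\Big)^2 \le \sum_{\chars'} \frac{\shapingPrefix(\chars' \mid \charseq)^2}{\proposalPrefix(\chars' \mid \charseq)} \sum_{\chars'} \proposalPrefix(\chars' \mid \charseq).
\]
Since the proposal sums to one, this gives $\mathrm{Var}[u] \ge 0$ directly. Equality holds iff $\shapingPrefix/\sqrt{\proposalPrefix} \propto \sqrt{\proposalPrefix}$ on the support, that is, iff $\proposalPrefix(\chars' \mid \charseq) = \shapingPrefix(\chars' \mid \charseq)/S(\charseq)$. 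For this choice $u \equiv S(\charseq)$ is deterministic and the variance is zero, so it is a global minimizer. It is also unique among proposals that satisfy the support condition; a Lagrange-multiplier argument would give the same stationarity condition as a cross-check.

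Most of the care goes into the edge cases, not the inequality. We need $S(\charseq) > 0$, which holds because $\charseq$ was reached with positive shaping mass, and we need $S(\charseq) < \infty$, which is automatic on a finite alphabet. Terms with $\shapingPrefix(\chars' \mid \charseq) = 0$ contribute nothing, and we let the proposal vanish on them. The $\eos$ case of \cref{eq:conditional-shaping} needs no special treatment, because $\eos$ is just one more outcome in the finite sum. Finally, we should note that the normalized $\shapingPrefix(\cdot \mid \charseq)$ is generally not a probability distribution, so the optimal proposal is only proportional to it, with $S(\charseq)$ as the constant. This is exactly the $\propto$ in \Cref{prop:optimal-proposal}, and $S(\charseq)$ reappears as the (deterministic) incremental weight.
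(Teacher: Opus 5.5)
Your proof is correct, but it takes a different route from the paper's.

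\textbf{The paper's route.} The paper writes down the same variance and attaches a Lagrange multiplier $\lambda$ for the normalization constraint. It solves the stationarity condition $\shapingPrefix(\chars''\mid\charseq)^2/\proposalPrefix(\chars''\mid\charseq)^2=\lambda$, which gives $\proposalPrefix^*=\shapingPrefix/\sqrt{\lambda}$ with $\sqrt{\lambda}=\sum_{\chars''}\shapingPrefix(\chars''\mid\charseq)$. It then invokes KKT and convexity for global optimality. For KKT it asserts that every component of the optimal proposal is strictly positive, so the nonnegativity constraints are inactive.

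\textbf{What your route buys.} Your Cauchy--Schwarz equality case is equivalent to observing that the variance vanishes exactly when $u$ is constant on the proposal's support.
\begin{itemize}
\item It gives global optimality and uniqueness in one step, with no differentiability or interior-point argument.
\item It makes explicit that the minimal variance is zero and that the optimal incremental weight is the constant $S(\charseq)$.
\item It is cleaner on the boundary. When $\shapingPrefix(\chars'\mid\charseq)=0$ for some $\chars'$, the optimum has $\proposalPrefix^*(\chars'\mid\charseq)=0$, which contradicts the paper's strict-positivity claim. Your support-restricted sums need no such assumption.
\item Your explicit absolute-continuity constraint is genuinely needed for uniqueness. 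Renormalizing $\shapingPrefix(\cdot\mid\charseq)$ on any proper subset of its support gives another zero-variance, but biased, proposal.
\item You correctly include $\eos$ in the sums, which the paper's proof omits.
\end{itemize}
The Lagrangian route, in turn, derives the minimizer constructively without guessing it. It would also carry over to criteria whose optimum is not a zero-variance weight.

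\textbf{One slip.} Your claim that $S(\charseq)>0$ because $\charseq$ was reached with positive shaping mass is false in general. Take a product or $\min$ ensemble with $p_1(\mathtt{ab})=p_1(\mathtt{d})=\tfrac12$ and $p_2(\mathtt{ac})=p_2(\mathtt{d})=\tfrac12$. Then $Z>0$, and $\mathtt{a}$ is reached with positive probability with $\shapingPrefix(\mathtt{a})>0$. Yet $\targetFn(\mathtt{a})=0$ and $\shapingPrefix(\mathtt{a}\,\chars')=0$ for every $\chars'\in\charAlphabet$, so $S(\mathtt{a})=0$.

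Nothing breaks in that case: every proposal gives $u\equiv 0$, and the proportionality claim is vacuous. You should nonetheless state $S(\charseq)>0$ as a hypothesis rather than derive it. The paper makes the same tacit assumption when it divides by $\sqrt{\lambda}$.
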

\noindent See \cref{prop:optimal-proposal:proof} for proof.
Note that, even when the proposal is proportional to the next-token shaping function, the two may still differ when the shaping function is not normalized.
\begin{algorithm}[ht!] 
\caption{Sequential Monte Carlo\protect\footnotemark}
\begin{algorithmic}[1]
\footnotesize
\Procedure{SMC}{$\proposal, \shapingPrefix, M, \tau$}

\For{$m = 1 \ldots M$}    \Comment{$M$ number of samples}
  \State $(\charseq^{(m)}, w^{(m)}, \activeParticle^{(m)}) \gets (\varepsilon, \shapingPrefix(\varepsilon), \texttt{true})$
\EndFor

\While{$\exists m \in 1 \ldots M\colon \activeParticle^{(m)}$}
\For{$m=1 \dots M \text{ such that } \activeParticle^{(m)}$} \Comment{Active particles}

\State $y' \sim \proposal(\cdot \mid \charseq^{(m)})$
\If{$y' = \eos$}   \Comment{Complete the particle}
  \State $\activeParticle^{(m)} \gets \texttt{false}$ 
\Else
  \State $\charseq^{(m)} \gets \charseq^{(m)} \circ y'$
\EndIf
\State $w^{(m)} \gets w^{(m)} \frac{\shapingPrefix(y' \mid \charseq^{(m)})}{\proposal(y' \mid \charseq^{(m)})}$ 
\EndFor

\State $(\charseq^{(\cdot)}, w^{(\cdot)}, \activeParticle^{(\cdot)})
\gets \textsc{Resample}(\charseq^{(\cdot)}, w^{(\cdot)}, \activeParticle^{(\cdot)}, 
\tau
)$\!\!\!\!

\EndWhile

\State $\Zest \gets \frac{1}{M} \sum_{m=1}^M w^{(m)}$ 
\State $\widehat{\targetFn}(\charseq) \gets \frac{1}{M} \sum_{m=1}^M w^{(m)} \mathbbm{1}\{ \charseq \!=\! \charseq^{(m)}\}$
\State $\widehat{\ensemble}(\charseq) \gets \frac{\widehat{\targetFn}(\charseq)}{\Zest}$

\State \Return $(\Zest, \widehat{\targetFn}, \widehat{\ensemble})$ 

\EndProcedure

\vspace{.5\baselineskip}
\Procedure{Resample}{$\charseq^{(\cdot)}, w^{(\cdot)}, \activeParticle^{(\cdot)}, 
\tau$}

\State $\totalWeight \gets \sum_{m=1}^M w^{(m)}$

\State $\ess \gets \totalWeight^2 / \left(\sum_{m=1}^M \big(w^{(m)}\big)^2 \right)$\label{line:ess-def}

\If{$\ess < \tau \!\cdot\! M$}  \Comment{Resample}

\State $\overline{\charseq}^{(\cdot)} \gets \charseq^{(\cdot)};\ \tmpWeight^{(\cdot)} \gets w^{(\cdot)}$ \Comment{Temporary copy}

\For{$m = 1 \ldots M$}
\State $R \sim \mathrm{Categorical}(\totalWeight^{-1}\langle \tmpWeight^{(1)}, \ldots, \tmpWeight^{(M)}\rangle)$
\State $(\charseq^{(m)}, w^{(m)}, \activeParticle^{(m)}) \gets (\overline{\charseq}^{(R)}, \totalWeight/M, \activeParticle^{(R)})$
\EndFor
\EndIf
\State \Return $(\charseq^{(\cdot)}, w^{(\cdot)}, \activeParticle^{(\cdot)})$
\EndProcedure
\end{algorithmic}
\label{alg:smc}
\end{algorithm}
\footnotetext{Note that the final weight step is more detailed than the pseudocode suggests, as \ensuremath{\eos} is handled as in \cref{eq:conditional-shaping}.}

\subsection{Sequential Monte Carlo}
Sequential Monte Carlo (SMC) is a generalization of SIS that incorporates a resampling step that reallocates computation from less to more promising partial sequences while sampling particles.
More concretely, using a multinomial resampling strategy, at each step,\footnote{In practice, resampling is only done based on the  \defn{effective sample size} 
$\hat{N} \defeq \frac{\left(\sum_{m=1}^M w^{(m)}\right)^2}{\sum_{m=1}^M \left(w^{(m)}\right)^2}$
which measures the variance among importance weights. 
We resample when $\hat{N}$ falls below a threshold $\tau\cdot M$ for $\tau \in (0, 1)$.} SMC samples indices $a^{(1)}, \ldots, a^{(M)} \simIID \mathrm{Categorical}(\totalWeight^{-1}\langle w^{(1)}, \ldots, w^{(M)}\rangle)$, where $\totalWeight \defeq \sum_{m=1}^M w^{(m)}$. All particles are then reassigned as $(\charseq^{(i)}, w^{(i)}) = (\charseq^{(a^{(i)})}, W/M)$. 
The full procedure is shown in \Cref{alg:smc}.

\begin{table*}[t]
\centering
\small
\caption{Summary of model ensembling results. \textbf{Bold}: non-overlapping 95\% CIs with best baseline. $^{\dagger}$: highest average expected accuracy by column (within-model), or overall (cross-model). \colorbox{gray!15}{Shaded}: strongest single-prompt baseline.}
\label{tab:summary}
\setlength{\tabcolsep}{2.7pt}
\footnotesize
\begin{tabular}{@{}l ccc ccc ccc@{}}
\toprule
& \multicolumn{3}{c}{JSON Schema} & \multicolumn{3}{c}{BBH (Word Sorting)} & \multicolumn{3}{c}{SPIDER (Text-to-SQL)} \\
\cmidrule(lr){2-4} \cmidrule(lr){5-7} \cmidrule(lr){8-10}
\textbf{Method} & \textbf{Q}wen & \textbf{P}hi & \textbf{L}lama & \textbf{Q}wen & \textbf{P}hi & \textbf{L}lama & \textbf{Q}wen & \textbf{P}hi & \textbf{L}lama \\
\midrule
Prompt 1 & \cellcolor{gray!15}\res{95.6}{0.8} & \cellcolor{gray!15}\res{83.5}{1.9} & \cellcolor{gray!15}\res{86.3}{1.6} & \cellcolor{gray!15}\res{13.9}{1.0} & \cellcolor{gray!15}\res{27.7}{1.2} & \cellcolor{gray!15}\res{38.6}{1.5} & \res{40.0}{1.5} & \res{24.2}{1.5} & \cellcolor{gray!15}\res{38.4}{1.8} \\
Prompt 2 & \res{93.8}{0.9} & \res{79.5}{5.6} & \res{84.3}{2.5} & \res{5.8}{0.9} & \res{24.3}{1.5} & \res{36.5}{1.9} & \cellcolor{gray!15}\res{53.6}{1.0}\sig & \cellcolor{gray!15}\res{34.5}{1.7} & \res{36.3}{1.7} \\
\midrule
Local Prob. Avg. & \res{95.2}{3.2} & \res{82.0}{4.7} & \res{83.4}{3.5} & \res{9.6}{3.0} & \res{25.8}{5.8} & \res{41.2}{6.5} & \res{45.6}{3.7} & \res{28.4}{4.8} & \res{39.4}{6.3} \\
Best Local $\func$-Ens. &  \res{96.4}{0.7}\sig & \res{87.2}{4.4} & \res{88.6}{4.4} & \res{15.0}{2.3} & \res{27.2}{5.1} & \win{\res{45.0}{2.9}}\sig & \res{52.6}{1.4} & \res{36.0}{3.4}\sig & \win{\res{43.6}{3.4}}\sig \\ \greydashedrule
\makecell[l]{Best Global $\func$-Ens.\\(Token SMC, $M=10$)} & \res{96.0}{0.2} & \win{\res{87.4}{0.3}} & \res{88.9}{1.1} & \win{\res{15.3}{2.0}} & \res{28.2}{1.1} & \win{\res{42.3}{0.9}} & \res{51.2}{1.8} & \res{34.2}{0.7} & \win{\res{41.9}{1.4}} \\ 
\makecell[l]{Best Global $\func$-Ens.\\(Token SMC, $M=25$)} & \res{95.8}{0.3} & \win{\res{87.8}{0.8}}\sig & \win{\res{89.1}{0.4}}\sig & \win{\res{14.9}{1.3}} & \res{28.6}{1.0}\sig & \win{\res{42.7}{0.6}} & \res{51.2}{0.9} & \res{34.5}{0.7} & \win{\res{42.0}{0.5}} \\ \hline
\makecell[l]{Best Global $\func$-Ens.\\(Byte SMC, $M=10$)} &\res{92.8}{1.0} & \win{\res{87.5}{0.7}} & \res{88.9}{1.4} & \win{\res{15.5}{0.1}}\sig & \res{23.8}{1.5} & \res{34.0}{1.4} & \res{48.3}{2.1} & \res{34.5}{0.9} & \win{\res{42.4}{0.9}} \\
\makecell[l]{Best Cross-Model $\func$-Ens.\\ (Byte SMC, $M=10$)} & \multicolumn{3}{c}{\res{92.6}{0.9}(\textbf{Q}+\textbf{P}, Prompt 1)} & \multicolumn{3}{c}{\win{\res{41.1}{1.0}} (\textbf{P}+\textbf{L}, Prompt 1)} & \multicolumn{3}{c}{\res{55.5}{1.1}\sig (\textbf{Q}+\textbf{L}, Prompt 2)} \\
\bottomrule
\end{tabular}
\end{table*}

\section{Experiments}\label{sec:experiments}
Different LMs and prompts often exhibit complementary strengths on the same task \citep{polo2024efficient}. As discussed in our related work section (cf. \Cref{sec:related-work}), prior work on LM ensembling during inference has largely focused on local probability averaging (i.e., sum ensembles) \citep{yao2025determinethenensemble}
, whereas our $\func$-ensemble framework spans a broader family of aggregation strategies. We evaluate whether these alternative $\func$-ensembles can exploit this to improve upon individual models across tasks. More specifically, our experiments aim to answer the following questions:
\begin{enumerate}[label=RQ\arabic*., leftmargin=*, labelwidth=2em]
    \item To what extent can language models be \textit{synergistic}, i.e., work better jointly, rather than on their own?
    \item In what ways does the mediation strategy between models, encoded by $\func$, affect performance under the resulting ensemble?
    \item How does the quality of approximation of $\ensemble$ affect the performance of the $\func$-ensemble?
\end{enumerate}

\subsection{Setup}

\myparagraph{Models and Prompts.}
We evaluate ensembles using instruction-tuned LMs from three model families: Llama \citep[Llama3.1-8B-Instruct;][]{grattafiori2024llama3herdmodels}, Qwen \citep[Qwen2.5-7B-Instruct;][]{qwen2025qwen25technicalreport}, and Phi \citep[phi-4 (14B);][]{abdin2024phi4technicalreport}. 
This yields two experimental settings: \textit{within-model} ensembles that combine the same model under different prompts and \textit{cross-model} ensembles that combine two models under the same prompt.

\myparagraph{Datasets.} 
We study the effectiveness of ensembling language models on three exact-match structured text generation tasks in different domains, where the number of generation steps allows differences between local and global sampling to manifest. For each dataset, we evaluate 100 randomly sampled instances. 
\begin{itemize}
    \item \textbf{JSON.} \textit{Task:} Generate documents that conform to a specific JSON schema using the GlaiveAI function-calling dataset \citep{glaiveai-function} from JSONSchemaBench \cite{geng2025json}. \textit{Metric:} Binary correctness, whether the generated string conforms to a specific JSON schema.
    \item \textbf{Big-Bench Hard.} \textit{Task:} Diverse reasoning tasks \citep{suzgun-etal-2023-challenging}. We focus on the word sorting subtask: given a list of words, sort them in alphabetical order. \textit{Metric:} binary correctness of the generated sorted list. 
    \item \textbf{Text-to-SQL.} \textit{Task:} Generate SQL queries from a natural language question and relevant database schema \citep{yu-etal-2018-spider}. \textit{Metric: } Execution accuracy, i.e., checking whether the generated query produces the same output when executed against the provided test database.
\end{itemize}

\myparagraph{Baselines.}
In this work, we study which aggregation functions effectively combine models and how approximation quality affects performance. 
Notably, as discussed in \Cref{sec:related-work}, these questions are largely orthogonal to improving model routing, enhancing post-hoc selection (e.g., best-of-$N$), or comparing vocabulary alignment heuristics. Thus, to evaluate RQ1-RQ3, we compare our SMC $\func$-ensembles against: (1) the best-performing base model, (2) locally normalized probability averaging \citep[the predominant aggregation strategy for ensembling during inference;][]{chen2025ensemble}, (3) $\func$-ensembles at different approximation qualities, notably, locally normalized $\func$-ensembles.

\myparagraph{Evaluation Metric.} Because a $\func$-ensemble defines a distribution over outputs, we evaluate its performance in terms of the probability mass it assigns to correct answers, i.e., accuracy in expectation under $\ensemble$, rather than the accuracy of a single decoded output. We approximate it using the weighted particles from SMC:
\begin{equation}
\E_{\charseq \sim \ensemble}\left[\indicator{\charseq \in \mathcal{Y}}\right] \approx \sum_{m=1}^{M} \bar{\importanceWeight}^{(m)} \cdot \indicator{\charseq^{(m)} \in \mathcal{Y}}
\end{equation}
where $\bar{\importanceWeight}^{(m)} = \importanceWeight^{(m)} / \sum_{m'=1}^M \importanceWeight^{(m')}$ are the normalized importance weights and $\mathcal{Y} \subseteq \charStrings$ is the set of correct outputs.

\myparagraph{$\func$-Ensembles.} We evaluate four generalized means spanning the consensus-coverage spectrum: $\min$ of experts ($\tau \rightarrow -\infty$), product of experts ($\tau = 0$), mixture of experts ($\tau = 1$), and $\max$ of experts ($\tau\rightarrow \infty$). These represent commonly used aggregation strategies \citep{khalifa2023exploring} and the meaningful limits of the generalized mean family (see \Cref{tab:f-ensembles}), enabling systematic analysis of how the choice of aggregation affects performance.
The used functions are \textit{annihilative}\footnote{In fact, all generalized means are, as derived in \Cref{prop:annihilative-f}.}, ensuring the used shaping function (cf. \Cref{sec:sis}) satisfies absolute continuity when used as the proposal.

Across experiments, we report means and 95\% CIs across 5 different seeds with $M=10$ particles (we provide particle ablations in \Cref{sec:app-particle-study}), a resampling threshold of 0.9, and equal expert weight, unless specified otherwise.
To understand the interactions between base models and the resulting ensemble distribution, we run our experiments for $K=2$ experts.
We report more details in \cref{sec:exp-details}.

\subsection{Results}\label{sec:results}

\myparagraph{[RQ1] Models can act synergistically with themselves and each other. } 
The effectiveness of ensembling depends on how the experts differ: we empirically find that ensembles improve upon base models primarily when both prompts perform moderately on the same examples. When one prompt already succeeds, the other has little to contribute; when both fail, combining them does not help.\footnote{We provide empirical support in \Cref{sec:app-overlap}.} Accordingly, we introduce variation by selecting prompt pairs that provide complementary instructions (see \Cref{sec:prompts} for details). As shown in \autoref{tab:main-results}, this can already yield significant performance improvements over the better base model for at least one ensemble for each task.

Cross-model ensembles show similar increases in expected accuracy without requiring complementary prompt construction. For BBH and SPIDER, ensembling the two better-performing models with the same prompt yields significant improvements over base model performance---showing the ability to even outperform the best prompt ensembles. This suggests that architectural and data diversity across model families provides value beyond prompt variation alone.

\begin{figure}
    \centering
    \includegraphics[width=\linewidth]{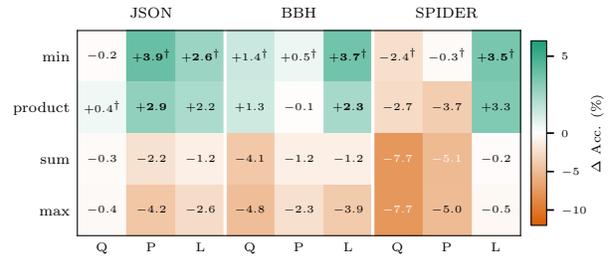}
    \caption{
    $\func$-ensembles for token-level SMC. Values show change in accuracy (\%) relative to the best single-prompt baseline. $^{\dagger}$: best in column. \textbf{Bold}: significant improvement over the best base prompt (non-overlapping 95\% CIs). Q=Qwen, P=Phi, L=Llama.}
    \label{fig:f-ensemble-diffs}
\end{figure}

\begin{figure*}[th!]
    \centering
    \includegraphics[width=\linewidth]{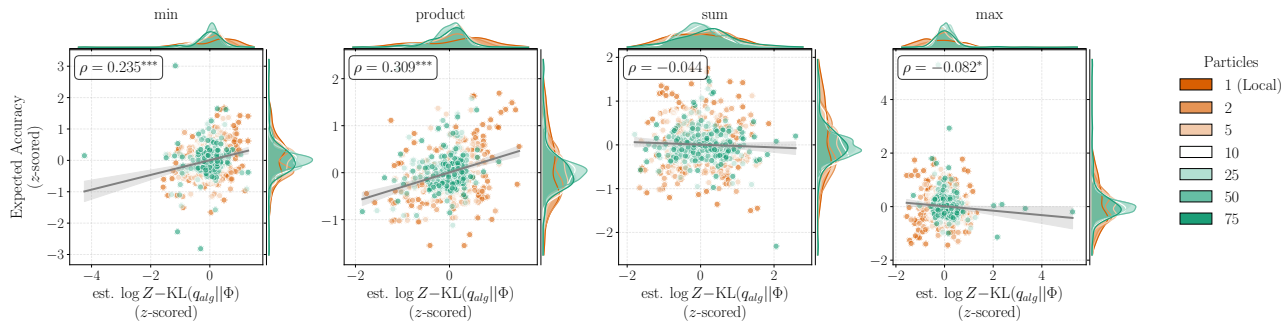}
    \caption{Representative relationship between expected accuracy and the estimated log marginal likelihood, $\log \hat Z$, on BBH for Llama. Each datapoint represents (sample, particle) configurations aggregated across seeds. Both metrics are $z$-scored per example to normalize for variations in problem difficulty and scale. Points are colored by the number of particles used for estimation. Marginal plots display the kernel density estimates for the respective axes. The grey line and shaded band indicate a linear regression fit with a 95\% confidence interval. Pearson correlation coefficients $\rho$ are annotated in the top-left ($^{***}p<0.001$, $^{*}p<0.05$).
    }
    \label{fig:approximation-quality-bbh}
\end{figure*}

\myparagraph{[RQ2] Consensus-seeking ensembles improve expected accuracy over probability averaging.}
From \autoref{fig:f-ensemble-diffs} (and more detailed results in \autoref{tab:main-results}), we also observe significant differences between ensemble methods across tasks and models. Although some significant variation is visible among operators (with $\min$ ensembles coming out on top most of the time), the primary distinction is between consensus-seeking and coverage-seeking operators. In the setting of combining two complementary language models, consensus-seeking operators consistently outperform coverage-seeking ones.
Additionally, in terms of expected accuracy, token-wise SMC ensembling with consensus-seeking operators yields significantly increased expected accuracies over the base model performances for at least one model per dataset.
To better understand these results, we note that for any weighted mixture $\ensemble(\charseq) = \sum_{k=1}^K w_k \p_k(\charseq)$ with $w \in \simplexFun{K}$, the ensemble's expected accuracy is bounded by the better-performing base model:
\begin{subequations}
    \begin{align}
    \E_{\charseq \sim \ensemble}&[\indicator{\charseq \in \mathcal{Y}}] = \sum_{\charseq \in \charStrings} \ensemble(\charseq) \cdot \indicator{\charseq \in \mathcal{Y}} \\ &= \sum_{k=1}^K w_k\,\E_{\charseq \sim \p_k}[\indicator{\charseq \in \mathcal{Y}}]\\
    &\leq \max_{k=1, \ldots, K}\left(w_k\E_{\charseq \sim \p_k}[\indicator{\charseq \in \mathcal{Y}}]\right)
    \end{align}
\end{subequations}
For an equal weighting (i.e., probability averaging), $\E_{\charseq \sim \ensemble}[\indicator{\charseq \in \mathcal{Y}}]$ equals the arithmetic mean of the base model accuracies.

First, this provides a sanity check of our implementation: using our SMC algorithm, we observe that the expected accuracy of the equally-weighted mixture converges to the average accuracy of the base models as we increase the number of particles (cf. \autoref{tab:main-results}). 
Second, this observation yields a theoretical explanation for prior empirical  work by \citet{yao2025determinethenensemble}, which shows that if base LMs exhibit a large performance gap, a variety of LM ensembling methods that rely on local probability averaging drop below the accuracy of the better-performing model.
We reproduce similar behavior in the local probability averaging baseline in \autoref{tab:summary}.
Although the locally normalized sampling distribution can differ from the global one used in our derivation, we see that probability averaging remains fundamentally constrained by this property, regardless of the performance gap. By contrast, consensus-seeking operators show more robust results.

\myparagraph{[RQ3] Approximation quality affects $\func$-ensemble performance differently.}  \Autoref{tab:summary} shows some promising aggregated results regarding the usefulness of SMC for attaining better ensembles over baseline approaches. However, it remains unclear in what scenarios improved \textit{approximation quality}, i.e., the closeness of our sampling distribution to the target distribution $\ensemble$, correlates with expected accuracy in the evaluated tasks.

To this end, we follow prior work \cite{andrieu2009thepa, lew2023sequential, zhao2024twisted}, which shows that $\E[\log \widehat Z]$ provides a lower bound on $\log Z - \mathrm{KL}(\q_{\mathrm{alg}} \| \ensemble)$, where $\q_{\mathrm{alg}}$ denotes the marginal distribution induced by the SMC sampling algorithm.  Intuitively, higher values of $\E[\log \widehat Z]$ mean that the algorithm's samples are ``better'', i.e., closer to being sampled from the global ensemble $\ensemble$.

Both the approximation quality metric and expected accuracy are defined at the instance level, as $\log Z$ varies across samples. 
To enable comparison, we compute per-instance Spearman correlations and report their mean in \autoref{tab:correlation}. For visualization, we $z$-score both metrics per instance to normalize for problem difficulty, allowing us to plot aggregated trends.
We visualize a representative subset in \autoref{fig:approximation-quality-bbh}, where we plot $z$-scored approximation quality against $z$-scored expected accuracy for each unique (sample, particle count) configuration aggregated across seeds, and refer to \Cref{sec:appendix-additional-results} for additional results.

We observe rather consistent, medium-sized, significant positive Spearman correlations for consensus-seeking operators (min and product), which can also be seen in other datasets. This may suggest that for these $\func$-ensembles, improving the posterior approximation, whether through additional particles or better proposals, translates to measurably higher task performance.
In contrast, coverage-seeking operators (sum and max) tend to even show weak negative correlations between the approximation quality and expected accuracy.
For sum ensembles, this aligns with our theoretical expectations, as, although local probability averaging can help, the global ensemble's expected accuracy will always converge to the average of the base models'. 

\section{Conclusion}\label{sec:conclusion}
We develop $\func$-ensembling; a unified framework for composing language models into ensemble distributions over strings.
Motivated by significant differences in locally and globally normalized string scoring in prompt intersection, we further propose a flexible byte-level SMC ensembling algorithm that prioritizes globally promising samples during inference. 
In a series of experiments, we show that language models can act synergistically with one another. Unlike probability averaging, which is bounded by its base model performances, consensus-seeking $\func$s yield more robust accuracy improvements and, for these operators, better posterior approximations translate to better task performance.\looseness=-1

\section*{Impact Statements} 
This paper presents work aimed at advancing the field of machine learning by combining language models. There are many potential societal consequences of our work, none of which we feel need to be highlighted here.
\bibliography{custom}
\bibliographystyle{acl_natbib}

\onecolumn
\appendix

\section{Related Work}\label{sec:related-work}

\myparagraph{Ensembling Language Models. }A recent survey by \citet{chen2025ensemble} taxonomizes recent work around language model ensembling. Namely, they categorize these works into three major groups based on the stage where ensembling occurs: \textit{before}, \textit{during}, and \textit{after} decoding. Methods applied \textit{before} decoding typically involve routing queries to specific models based on predicted performance or cost, such as the classification-based routing by \citet{shnitzer2024large} or reward-guided methods like Zooter \citep{lu2024routing}. Conversely, \textit{after} decoding approaches focus on selecting or fusing complete responses generated by multiple models; prominent examples include pairwise ranking in LLM-Blender \citep{jiang2023blender}, best-of-$N$ sampling \cite{lightman2024lets}, or cascade systems \citep{jitkrittum2023when, gupta2024language}. Finally, ensembling \textit{during} decoding involves aggregating predictions at the token or span level to guide generation, which is the subject of this study.

\myparagraph{Ensembling Language Models During Sampling.}
Existing methods for combining language models at the token level often grapple with the vocabulary alignment problem: incompatible tokenizers yield distributions over mismatching vocabularies. To address this, recent works have proposed mapping distributions into a shared space to enable probability aggregation. For instance, \citet{yu2024breaking} introduce a union vocabulary approach, projecting distributions onto a superset of tokens from all participating models, then computing the average token probability across models. \citet{yao2025determinethenensemble} refine this by constructing a union over only the top-$k$ tokens, thereby reducing computational overhead while maintaining alignment. Alternatively, approaches like DeePEn \citep{huang2024ensemble} and EVA \citep{xu2024bridging} bypass explicit vocabulary unification by projecting outputs into a shared relative embedding space or using pivot mechanisms to fuse distributions. Other techniques, such as PackLLM \citep{mavromatis2024pack}, focus on dynamic weighting, adjusting the influence of each model based on step-wise perplexity.
Most relatedly, \citet{phan2025exact} implements an experiment for combining byte-level language models of different model families via probability averaging.
Crucially, the main focus of many of these papers lies in developing heuristics for dealing with the vocabulary alignment problem, while applying averages \cite{yu2024breaking, xu2024bridging, yao2025determinethenensemble} or some type of weighted averages \cite{huang2024ensemble, mavromatis2024pack, jin2024collaborativedecodingcriticaltokens, li2024purifyinglargelanguagemodels} to the token probabilities.

In our work, we leverage prior methods for mapping language models to a shared character vocabulary \citep{phan2025exact, vieira2025from}, which sidesteps the alignment problem entirely. This allows us to focus on a different question: in what other ways can language model distributions be combined? We explore alternatives to probability averaging and propose an SMC-based ensembling algorithm that enables, in the limit, consistent sampling from the combined distribution.

\myparagraph{Approximate Posterior Inference for Large Language Models During Sampling.}
A growing line of work formulates constrained generation from language models as posterior inference, employing approximate inference algorithms to sample from intractable target distributions. Several approaches have been explored, including rejection sampling \citep{poesia2022synchromesh, lipkin2025fast}, MCMC methods \citep{miao2019cgmh, zhang2020language, qin2022cold}, and sequential Monte Carlo \citep{lew2023sequential, zhao2024twisted, loula2025syntactic}. Of these, SMC is particularly well-suited to language models as it exploits autoregressive factorization, avoiding the need to re-evaluate entire sequences after each modification.
\citet{lew2023sequential} introduced SMC steering of LMs via probabilistic programs, enabling globally consistent sampling from conditional targets while only computing local constraints. \citet{zhao2024twisted} developed learned twist functions via contrastive training to guide SMC toward high-value completions, along with bidirectional bounds on the log-partition function for evaluating inference quality. Most recently, \citet{loula2025syntactic} applied SMC to sample from the global product of a language model and various syntactic and semantic potentials, demonstrating improved performance from better posterior approximations.

These prior works focus on combining a \emph{single} language model with external constraints (e.g., grammars, semantic specifications). In contrast, we study the composition of \emph{multiple} language models via general aggregation functions $\func$, where the target distribution arises from the models themselves rather than auxiliary constraints. This setting introduces distinct challenges: the choice of $\func$ induces qualitatively different ensemble behaviors, and we systematically study how both $\func$ and approximation quality affect downstream performance.

\section{Limitations}\label{sec:limitations}

We acknowledge the following limitations of our paper.

\myparagraph{Computational cost.}
Our approach relies on sequential Monte Carlo, which, as noted by \citet{loula2025syntactic}, introduces inference-time overhead relative to standard decoding due to maintaining multiple particles and performing resampling. This cost is amplified at the byte level: sequences require more SMC steps, and computing byte-level next-symbol probabilities from tokenized LMs requires marginalizing over admissible tokenizations. However, we note that SMC can operate in parallel across particles, and recent work on speculative decoding and efficient inference may help mitigate these costs. Moreover, for applications where output quality is important---such as high-stakes decision-making or constrained generation---the improved accuracy from better posterior approximations may justify the additional computation.

\myparagraph{Number of experts.}
We focus on ensembling two language models to enable controlled analysis of pairwise interactions between experts and aggregation functions. This design choice allows us to isolate the effects of different $\func$-ensembles without confounding factors from complex multi-expert dynamics. Our framework naturally extends to $K > 2$ experts, which may yield further performance gains through increased diversity. We leave systematic study of scaling the number of experts to future work.

\myparagraph{Scope of ensembling.}
As discussed in \autoref{sec:related-work}, our work focuses on ensembling language models \textit{during} decoding, i.e., combining next-token distributions at each generation step. This is distinct from other inference---time combination strategies. Model routing \citep{shnitzer2024large} selects a single model per query, avoiding the cost of running multiple models but forgoing the benefits of combining their predictions at the token level. Post-hoc methods like best-of-$N$ \citep{lightman2024lets} or LLM-Blender \citep{jiang2023blender} generate complete outputs from each model and then select or fuse among them, enabling comparison of full sequences but requiring multiple complete generations. Our during-decoding approach intervenes at every token, allowing the ensemble to steer generation toward globally promising regions before committing to a full sequence. These paradigms are complementary: for instance, during-decoding ensembles could be combined with post-hoc reranking for further gains.

\myparagraph{Model selection.}
We evaluate instruction-tuned models (Llama, Qwen, Phi) in the 7--14B parameter range. This choice balances experimental tractability with practical relevance; these models are widely deployed and accessible to practitioners. We do not evaluate reasoning-specialized models (e.g., models trained with chain-of-thought or process supervision) or larger frontier models, which may exhibit different ensembling dynamics. Exploring whether consensus-seeking ensembles can combine the strengths of specialized reasoners is a promising direction for future work.

\myparagraph{Task selection.}
We evaluate on structured text generation tasks (JSON schema conformance, word sorting, text-to-SQL) with objective, binary correctness criteria. This choice is deliberate: structured tasks allow resampling effects to manifest over multiple generation steps, and objective evaluation enables rigorous comparison of methods. Our findings demonstrate clear benefits from consensus-seeking ensembles and better posterior approximations in this setting. Whether these benefits transfer to open-ended generation tasks (e.g., creative writing, dialogue) remains an open question, as evaluation in such settings is harder to evaluate.

\section{Tokenized Language Models}\label{sec:tokenization}

Modern language models are typically designed as probability distributions over strings of \emph{tokens} rather than over strings of bytes.
Although one byte string admits only one tokenization, multiple sequences of tokens may correspond to the same string of bytes. This has two fundamental consequences. First, the LM may spread probability mass among multiple valid tokenizations of a byte string \cite{phan2025exact, vieira2025from,vieira2025language}.
Second, when comparing the probability that two different models assign to a given string---as we do in the \emph{ensemble} framework---one cannot directly compare token-level probabilities because the models may tokenize the string differently.\looseness=-1

To combine such models, we must map them to a common space.
As all valid tokenizations of a string ultimately decode to the string itself, one possible way is to consider the probability distribution over byte strings.\footnote{In practice, we first use a layer for mapping character strings to byte strings $\mathcal{B}^*$, where $\mathcal{B}=\{0, {\ldots}, 255\}$.}\textsuperscript{,}\footnote{Note that an alternative approach is to map a language model's distribution to another language model's tokenization space \citep{phan2025crosstokenizerlikelihoodscoringalgorithms}.}
This section formalizes this connection through the notion of a \emph{tokenized language model}.
In order to introduce this notion, we define a \emph{tokenization model} \citep{gastaldi2025the} as a tuple $\langle\charAlphabet, \tokAlphabet, \tokenizer,\decodingFun \rangle$, where $\tokAlphabet$ and $\charAlphabet$ are respectively the token and byte alphabet, $\tokenizer\colon \charAlphabet^* \rightarrow \tokAlphabet^*$ is the tokenizer, and $\decodingFun\colon \tokAlphabet^* \rightarrow \charAlphabet^*$ is a decoding function that must satisfy $\decodingFun(\tokenizer(\charseq))=\charseq$.
\begin{definition}
Let $\tokAlphabet$ be an alphabet of token symbols, $\charAlphabet$  an alphabet of character symbols, and $\ptokens$ a language model over $\tokAlphabet^*$. A \textbf{tokenized language model} $\p_{\charAlphabet}$ is a language model over $\charStrings$ that is induced by the decoding function $\decodingFun\colon \kleene{\tokAlphabet} \rightarrow \charStrings$:
\begin{equation}
\label{eq:character-level-model}
\pchars(\charseq) 
\defeq \Pr{\Y \sim \ptokens}{\decodingFun(\Y) = \charseq}
\end{equation}
\end{definition}
Note that $\pchars(\charseq)$ correctly accounts for the fact that \emph{many} token strings $\Y$ may map to a given character string $\charseq$ through the decoding function $\decodingFun$. The prefix probability $\pPrefix_{\charAlphabet}$ follows automatically from \cref{eq:prefix-prob,eqn:conditional-prefix}.

\section{Deferred Proofs}\label{sec:proofs}

\subsection{Importance Sampling}\label{app:importance-sampling}

\begin{restatable}{proposition}{ImportanceSamplingGuarantees}
\label{prop:importance-sampling-guarantees}
Importance sampling has the following guarantees:
\begin{enumerate}[noitemsep,topsep=0pt,parsep=0pt,partopsep=0pt,leftmargin=*]
\item The estimated normalization constant is unbiased:
\begin{align}
\E_{\charseq^{(1)}, \ldots, \charseq^{(M)} \simIID \proposal} \left[ \widehat{Z} \right]
= Z
\end{align}
\item The estimated target function is unbiased:
\begin{align}
\E_{\charseq^{(1)}, \ldots, \charseq^{(M)} \simIID \proposal} \left[
\widehat{\targetFn}(\charseq)
\right] = \targetFn(\charseq)
\end{align}
\item The estimated distribution is consistent:
\begin{align}
\lim_{M \to \infty} \E_{\charseq^{(1)}, \ldots, \charseq^{(M)} \simIID \proposal} \left[ \widehat{\ensemble}(\charseq) \right]
= \ensemble(\charseq)
\end{align}
\end{enumerate}
\end{restatable}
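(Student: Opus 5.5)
The plan is to prove the three items in order; the first two follow from linearity of expectation, and the third from the strong law of large numbers. Throughout we use the absolute-continuity requirement on $\proposal$ from \cref{def:importance-sampling}: $\targetFn(\charseq)>0$ implies $\proposal(\charseq)>0$. Under it, the weight $\importanceWeight(\charseq)=\targetFn(\charseq)/\proposal(\charseq)$ is well defined wherever $\proposal(\charseq)>0$, and strings with $\proposal(\charseq)=0$ contribute nothing to any of the sums below.

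For item 1, linearity and the i.i.d.\ assumption reduce $\E[\widehat Z]$ to $\E_{\charseq\sim\proposal}[\importanceWeight(\charseq)]$. This equals $\sum_{\charseq:\proposal(\charseq)>0}\proposal(\charseq)\,\targetFn(\charseq)/\proposal(\charseq)=\sum_{\charseq:\proposal(\charseq)>0}\targetFn(\charseq)$. By absolute continuity this is the full sum $\sum_{\charseq\in\charStrings}\targetFn(\charseq)=Z$, which is finite by \cref{def:fensemble}. Since $\targetFn\ge 0$, Tonelli justifies exchanging the expectation with the countable sum.

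For item 2, fix $\charseq$. By linearity, $\E[\widehat{\targetFn}(\charseq)]=\E_{\charseq'\sim\proposal}[\importanceWeight(\charseq')\indicator{\charseq'=\charseq}]$. This equals $\proposal(\charseq)\,\targetFn(\charseq)/\proposal(\charseq)=\targetFn(\charseq)$ when $\proposal(\charseq)>0$. When $\proposal(\charseq)=0$ it equals $0$, and absolute continuity gives $\targetFn(\charseq)=0$ in that case as well.

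For item 3, write $\widehat{\ensemble}(\charseq)=A_M/B_M$ with $A_M=\widehat{\targetFn}(\charseq)$ and $B_M=\widehat Z$. Each is an empirical mean of i.i.d.\ nonnegative variables with finite means $\targetFn(\charseq)$ and $Z$. The strong law then gives $A_M\to\targetFn(\charseq)$ and $B_M\to Z>0$ almost surely, so $\widehat{\ensemble}(\charseq)\to\targetFn(\charseq)/Z=\ensemble(\charseq)$ almost surely.

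The main obstacle is that the statement concerns the limit of the \emph{expectation}, so almost-sure convergence must be upgraded. Two points need care. First, on the event $\widehat Z=0$ the ratio is undefined; I would adopt the convention $\widehat{\ensemble}=0$ there. This event has probability $(\Pr{}{\importanceWeight=0})^M\to0$, because $Z>0$ forces $\Pr{}{\importanceWeight>0}>0$. Second, since all weights are nonnegative and $\widehat{\targetFn}(\charseq)\le\widehat Z$, we have $0\le\widehat{\ensemble}(\charseq)\le1$. Bounded convergence then yields $\E[\widehat{\ensemble}(\charseq)]\to\ensemble(\charseq)$, which completes the proof.
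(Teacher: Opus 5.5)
Your proof is correct and follows the paper's route: linearity of expectation plus absolute continuity for the two unbiasedness claims, and the strong law of large numbers applied to numerator and denominator for consistency. Your Part 3 is more careful than the paper's, which goes straight from almost-sure convergence of the ratio to convergence of its expectation; your convention on the event $\widehat{Z}=0$, the bound $0\le\widehat{\ensemble}(\charseq)\le 1$, and the bounded convergence theorem supply the justification that step needs.
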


\label{prop:importance-sampling-guarantees:proof}
\begin{proof}
\textbf{Part (1).}
The fact that $\hat{Z}$ is an unbiased estimator of $Z$ follows from the definition of $\E$ and its linearity under the assumption that $\proposal(\charseq) = 0 \Longrightarrow \targetFn(\charseq) = 0$ for all $\charseq \in \charStrings$ \citep[e.g.,][]{ross2023variance}:
\begin{subequations}
\begin{align}
    \E_{\charseq^{(1)}, \ldots, \charseq^{(M)} \simIID \proposal} \left[ \widehat{Z} \right]   &= \E_{\charseq^{(1)}, \ldots, \charseq^{(M)} \simIID \proposal} \left[\frac{1}{M} \sum_{m=1}^{M}  w^{(m)} \right] &\proofExplain{\cref{def:importance-sampling}}\\
    &= \frac{1}{M} \sum_{m=1}^{M} \E_{\charseq^{(m)} \simIID \proposal} \left[  w^{(m)} \right] &\proofExplain{Independence}\\
    &= \frac{1}{M} \sum_{m=1}^{M}  \E_{\charseq \sim \proposal} \left[ \frac{\targetFn(\charseq)}{\proposal(\charseq)} \right]  &\proofExplain{Identicality}\\
    &= \E_{\charseq \sim \proposal} \left[ \frac{\targetFn(\charseq)}{\proposal(\charseq)} \right] &\proofExplain{Algebra}\\
    &= \sum_{\charseq\in\charStrings} \proposal(\charseq) \frac{\targetFn(\charseq)}{\proposal(\charseq)} & \proofExplain{Def.\@ of \ensuremath{\E}} \\
    &=\sum_{\charseq\in\charStrings} \targetFn(\charseq) & \proofExplain{Algebra, Absolute Continuity}\\
    &= Z &
\end{align}
\end{subequations}    

\textbf{Part (2).}
Under the same assumptions as above, we similarly show that $\hat{\targetFn}$ is an unbiased estimator of $\targetFn$. Choose $\charseq \in \charStrings$ arbitrarily.
\begin{subequations}
\begin{align}
    \E_{\charseq^{(1)}, \ldots, \charseq^{(M)}\simIID\proposal}\left[\hat{\targetFn} (\charseq)\right] &= \E_{\charseq^{(1)}, \ldots, \charseq^{(M)}\simIID\proposal}\left[\frac{1}{M} \sum_{m=1}^M w^{(m)} \indicator{\charseq = \charseq^{(m)}}\right] &\proofExplain{\cref{def:estimate-ensemble}}\\
    &=\frac{1}{M} \sum_{m=1}^M \E_{\charseq^{(m)}\simIID\proposal} w^{(m)} \indicator{\charseq = \charseq^{(m)}} &\proofExplain{Independence}\\
    &= \E_{\charseq'\sim\proposal} \left[\frac{\targetFn(\charseq')}{\proposal(\charseq')} \indicator{\charseq = \charseq'} \right] &\proofExplain{Definition of importance weight}\\
    &= \sum_{\charseq'\in\charStrings} \proposal(\charseq') \frac{\targetFn(\charseq')}{\proposal(\charseq')} \indicator{\charseq = \charseq'} &\proofExplain{Def.\@ of \ensuremath{\E}}\\
    &= \sum_{\charseq'\in\charStrings}\targetFn(\charseq') \indicator{\charseq = \charseq'} &\proofExplain{Algebra, Absolute Continuity}\\
    &=\targetFn(\charseq)
\end{align}
\end{subequations}

\textbf{Part (3).}
Pick $\charseq \in \charStrings$ arbitrarily, then consistency of the estimator follows under mild assumptions, the strong law of large numbers (SLLN) \citep[e.g.,][]{geweke1989bayesian, owen2013importance}, gives us the following
\begin{subequations}
\begin{align}
&\lim_{M \to \infty} \E_{\charseq^{(1)}, \ldots, \charseq^{(M)}\simIID\proposal} \left[ \widehat{\ensemble}(\charseq)  \right] \nonumber \\
&= 
\lim_{M \to \infty}
\E_{\charseq^{(1)}, \ldots, \charseq^{(M)}\simIID\proposal} \left[
\frac{\frac{1}{M}\sum_{m=1}^M w^{(m)} \indicator{\charseq = \charseq^{(m)}}}{ \frac{1}{M} \sum_{m=1}^M w^{(m)} }     
\right]   & \proofExplain{\cref{def:importance-sampling}}\\
&= 
\frac{\E_{\charseq'\sim\proposal} \left[\frac{\targetFn(\charseq')}{\proposal(\charseq')} \indicator{\charseq = \charseq'} \right]}{\E_{\charseq' \sim \proposal} \left[ \frac{\targetFn(\charseq')}{\proposal(\charseq')} \right] } 
& \proofExplain{a.s. under SLLN} \\
&= 
\frac{ \targetFn(\charseq)}{Z} & \proofExplain{\ensuremath{\hat{Z},~\hat{\targetFn}} unbiased} \\
&= 
\ensemble(\charseq)   & \proofExplain{\cref{def:fensemble}}
\end{align}
\end{subequations}
\end{proof}

\optimalProposal*
\label{prop:optimal-proposal:proof}
\begin{proof}
Suppose we fix a given string $\charseq$, and we wanted to choose the proposal distribution $\proposalPrefix(\cdot \mid \charseq)$ to minimize the variance of the importance weight
\begin{align}
\importanceWeight(\chars', \charseq) \defeq \frac{\shapingPrefix(\chars' \mid \charseq)}{\proposalPrefix(\chars' \mid \charseq)}
\end{align}

The expected value of the importance weight is 
\begin{subequations}
\begin{align}
\E_{\chars' \sim \proposalPrefix(\cdot \mid \charseq)}\left[ \frac{\shapingPrefix(\chars'\mid \charseq)}{\proposalPrefix(\chars' \mid \charseq)}\right] = \sum_{\chars''\in \charAlphabet} \proposalPrefix(\chars' \mid \charseq) \frac{\shapingPrefix(\chars'\mid \charseq)}{\proposalPrefix(\chars' \mid \charseq)} = \sum_{\chars'\in \charAlphabet} \shapingPrefix(\chars'\mid \charseq)         \end{align}
\end{subequations}

Thus, the variance of the weight is

\begin{subequations}
\begin{align}\label{eq:weight-variance}
\text{Var}\left(\importanceWeight(\chars', \charseq)\right)_{\chars' \sim \proposalPrefix(\cdot \mid \charseq)} &=  \E_{\chars' \sim \proposalPrefix(\cdot \mid \charseq)} \left[ \importanceWeight(\chars', \charseq)^2 \right] - \left(\sum_{\chars'\in \charAlphabet} \shapingPrefix(\chars'\mid \charseq)\right)^2 \\
&= -\left(\sum_{\chars'\in \charAlphabet} \shapingPrefix(\chars'\mid \charseq)\right)^2  + \sum_{\chars'\in \charAlphabet} \proposalPrefix(\chars' \mid \charseq) \frac{\shapingPrefix(\chars' \mid \charseq)^2}{\proposalPrefix(\chars' \mid \charseq)^2} \\
&= -\left(\sum_{\chars'\in \charAlphabet} \shapingPrefix(\chars'\mid \charseq)\right)^2  + \sum_{\chars' \in \charAlphabet}  \frac{\shapingPrefix(\chars' \mid \charseq)^2}{\proposalPrefix(\chars' \mid \charseq)}
\end{align}
\end{subequations}

Our goal is to minimize \Cref{eq:weight-variance} under the \emph{constraints} that the proposal should be a valid probability distribution over tokens, i.e., $\proposalPrefix(\chars' \mid \charseq) \ge 0$ for all $\chars'$ and $\sum_{\chars'\in\charAlphabet} \proposalPrefix(\chars' \mid \charseq) = 1$. We can formulate this problem using the method of Lagrange multipliers, i.e.,
\begin{align}
L 
&= -\left(\sum_{\chars'\in \charAlphabet} \shapingPrefix(\chars'\mid \charseq)\right)^2  + \sum_{\chars' \in \charAlphabet}  \frac{\shapingPrefix(\chars' \mid \charseq)^2}{\proposalPrefix(\chars' \mid \charseq)} - \lambda \left(1 - \sum_{\chars' \in \charAlphabet} \proposalPrefix(\chars' \mid \charseq) \right)
\end{align}
as a function of the Lagrange multiplier $\lambda$. Then,
\begin{subequations}
\begin{align}
\frac{\partial}{\partial \proposalPrefix(\chars'' \mid \charseq)} L 
&= \sum_{\chars'\in \charAlphabet}  \frac{\partial}{\partial \proposalPrefix(\chars'' \mid \charseq)} \frac{\shapingPrefix(\chars' \mid \charseq)^2}{\proposalPrefix(\chars' \mid \charseq)} + \lambda  \sum_{\chars'\in \charAlphabet} \frac{\partial}{\partial \proposalPrefix(\chars'' \mid \charseq)} \proposalPrefix(\chars' \mid \charseq)\\
&= \sum_{\chars'\in \charAlphabet}  - \frac{\shapingPrefix(\chars' \mid \charseq)^2} {\proposalPrefix(\chars' \mid \charseq)^2}\indicator{\chars' = \chars''} + \lambda  \sum_{\chars'\in \charAlphabet} \indicator{\chars' = \chars''}\\
&= - \frac{\shapingPrefix(\chars'' \mid \charseq)^2} {\proposalPrefix(\chars'' \mid \charseq)^2} + \lambda  
\end{align}
\end{subequations}
Solving for $\frac{\partial L}{\partial \proposalPrefix(\chars'' \mid \charseq)} = 0$ gives 
\begin{subequations}
\begin{align}
0 
&= - \frac{\shapingPrefix(\chars'' \mid \charseq)^2} {\proposalPrefix(\chars'' \mid \charseq)^2} + \lambda \\
\proposalPrefix(\chars'' \mid \charseq)^2 &= \frac{\shapingPrefix(\chars'' \mid \charseq)^2}{\lambda}  \\
\proposalPrefix(\chars'' \mid \charseq) &= \pm \sqrt{\frac{\shapingPrefix(\chars'' \mid \charseq)^2}{\lambda}}
\end{align}
\end{subequations}

As $\proposalPrefix(\chars' \mid \charseq) \ge 0$, the only valid solution is $\proposalPrefix^*(\chars' \mid \charseq) = \frac{\shapingPrefix(\chars'' \mid \charseq)}{\sqrt{\lambda}}$. 
Substituting the optimal choice into $\proposalPrefix(\chars' \mid \charseq)$ into the constraint $\sum_{\chars'\in\charAlphabet} \proposalPrefix(\chars' \mid \charseq) = 1$ and solving for the value of $\sqrt{\lambda}$ that makes it hold
\begin{subequations}
\begin{align}
1 &= \sum_{\chars''\in \charAlphabet} \proposalPrefix^*(\chars'' \mid \charseq) \\
&= 
\sum_{\chars''\in \charAlphabet} \frac{\shapingPrefix(\chars'' \mid \charseq)}{\sqrt{\lambda}} \\
\sqrt{\lambda} &= 
\sum_{\chars''\in \charAlphabet} \shapingPrefix(\chars'' \mid \charseq)
\end{align}
\end{subequations}
    
Thus, the optimal choice for $\proposalPrefix(\cdot \mid \charseq)$ is $\proposalPrefix^*(\chars'' \mid \charseq) \propto \shapingPrefix(\chars'' \mid \charseq)$.  In the context of the Karush-Kuhn-Tucker conditions, the nonnegativity constraints are inactive at the optimum (since all components of $\proposalPrefix$ are strictly positive), which justifies why the Lagrangian can be formulated using only the equality constraint. We also note that this is a convex quadratic subject to linear constraints; thus, the solution is globally optimal \citep[Chapter 4;][]{boyd.vandenberghe:04:convex}.
\end{proof}

\subsection{Absolute Continuity and $\func$}\label{app:absolute-continuity-f}

The discussion of absolute continuity raises a practical question: for what $\func$ does our standard choice of shaping function, $\shapingPrefix(\charseq) = \func(\pPrefix_{1}(\charseq), \ldots ,\pPrefix_{K}(\charseq))$, automatically satisfy this requirement?

\begin{definition}
For our choice of shaping function (cf. \Cref{sec:sis}), the absolute continuity condition $\shapingPrefix(\charseq) = 0 \Longrightarrow \ensemblePrefix(\xx) = 0$ imposes for $\func$ to fulfill
    \begin{align}
        \func(\pPrefix_{1}(\charseq'), \ldots ,\pPrefix_{K}(\charseq')) = 0 \Longrightarrow \func(\p_{1}(\charseq), \ldots ,\p_{K}(\charseq)) = 0
    \end{align}
for any prefix $\charseq'$ of $\charseq$.  
We call such a set of functions \defn{annihilative}---that is, if any prefix of a string is mapped to zero, then the entire string must also be mapped to zero.
\end{definition}

We note that when $\func$ does not satisfy annihilativity\footnote{An example of non-annihilativity is the class of contrastive ensembles, e.g., $\func(\p_1, \p_2) = \left(\p_1(\charseq) - \p_2(\charseq)\right)^2$, which are sometimes used for language model steering \cite{li-etal-2023-contrastive, dekoninck2024arithmetic}.}; this simply means that the standard shaping function $\shapingPrefix(\charseq) = \func(\pPrefix_{1}(\charseq), \ldots ,\p_{K}(\charseq))$ is not a suitable choice.
However, simple modifications to our choice of $\shapingPrefix(\charseq)$ can recover absolute continuity.
For instance, setting $\shapingPrefix(\charseq) = |\func(\pPrefix_{1}(\charseq'), \ldots ,\pPrefix_{K}(\charseq'))| + \epsilon$
for $\epsilon > 0$ may increase the variance of the importance weights, but ensures $\shapingPrefix(\charseq) > 0$ for all $\charseq \in \charStrings$.

\subsection{Alpha-Divergence Ensembles}

\alphaEnsembles*
\begin{proof}
To find the optimal consensus distribution $\ensemble^*$, we minimize the expert-weighted sum of $\alpha$-divergences subject to the constraint that $\ensemble$ lies on the simplex (i.e., $\sum_{\charseq\in \mathcal X} \ensemble(\charseq) = 1$). We formulate the Lagrangian $\Lambda$:
\begin{equation}
\begin{aligned}
\Lambda(\ensemble, \nu)
&= \sum_{k=1}^K
\frac{\expertPriorWeight_k}{\alpha(1-\alpha)}
\left(
1 - \sum_{\charseq \in \mathcal X}
\ensemble(\charseq)^\alpha
\pExpert_k(\charseq)^{1-\alpha}
\right) \\
&\quad
+ \nu \left( \sum_{\charseq \in \mathcal X} \ensemble(\charseq) - 1 \right).
\end{aligned}
\end{equation}
We take the partial derivative with respect to $\ensemble(\charseq)$ for a specific element $\charseq \in \mathcal{X}$:
\begin{equation}
\begin{aligned}
\frac{\partial \Lambda}{\partial \ensemble(\charseq)}
&= -\sum_{k=1}^K \frac{\expertPriorWeight_k}{\alpha(1-\alpha)}
\cdot \alpha \ensemble(\charseq)^{\alpha-1} \pExpert_k(\charseq)^{1-\alpha}
+ \nu \\
&= -\frac{\ensemble(\charseq)^{\alpha-1}}{1-\alpha}
\sum_{k=1}^K \expertPriorWeight_k \pExpert_k(\charseq)^{1-\alpha}
+ \nu .
\end{aligned}
\end{equation}
Setting the derivative to zero for stationarity:
\begin{equation}
\nu = \frac{\ensemble(\charseq)^{\alpha-1}}{1-\alpha} \sum_{k=1}^K \expertPriorWeight_k \pExpert_k(\charseq)^{1-\alpha}.
\end{equation}
Rearranging to solve for $\ensemble(\charseq)$:
\begin{equation}
\begin{aligned}
\ensemble(\charseq)^{\alpha-1}
&= \nu(1-\alpha)
\left(
\sum_{k=1}^K \expertPriorWeight_k
\pExpert_k(\charseq)^{1-\alpha}
\right)^{-1}.
\end{aligned}
\end{equation}
Raising both sides to the power of $\frac{1}{\alpha-1}$ (equivalently $-\frac{1}{1-\alpha}$):
\begin{equation}
\ensemble(\charseq) \propto \left( \sum_{k=1}^K \expertPriorWeight_k \pExpert_k(\charseq)^{1-\alpha} \right)^{\frac{1}{1-\alpha}}.
\end{equation}
Finally, enforcing the normalization constraint $\sum_{\charseq\in \mathcal X} \ensemble(\charseq) = 1$ yields the partition function $Z$, recovering the generalized mean form with exponent $\tau = 1-\alpha$.
\end{proof}

\begin{restatable}{proposition}{annihilativeProp}\label{prop:annihilative-f}
The family of generalized means $M_\familyParam$ defined in \Cref{eq:generalized-means} is annihilative for all $\familyParam \in \mathbb{R} \setminus \{0\}$ and the limit cases $\familyParam \to \pm\infty$, and $\familyParam \to 0$.
\end{restatable}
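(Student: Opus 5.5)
The plan is to reduce annihilativity to two elementary facts: prefix probabilities dominate string probabilities, and each generalized mean has a simple \emph{zero pattern}.

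\textbf{Step 1 (monotonicity of prefix probabilities).} From \cref{eq:prefix-prob}, every term is nonnegative and the sum includes the case where the continuation is empty. Hence for any prefix $\charseq'$ of $\charseq$ we have
\[
\pPrefix_k(\charseq') \ge \pPrefix_k(\charseq) \ge \p_k(\charseq) \ge 0 .
\]
In particular, $\pPrefix_k(\charseq')=0$ implies $\p_k(\charseq)=0$.

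\textbf{Step 2 (zero pattern of each $M_\familyParam$).} Throughout, the weights satisfy $w_k>0$; if some $w_k=0$, I would restrict attention to the experts with positive weight. I then characterize when $M_\familyParam(a_1,\dots,a_K)=0$ for $a\in\RnonnegK$.
\begin{itemize}
\item For $\familyParam>0$ and for $\familyParam\to+\infty$ (the max), $M_\familyParam(a)=0$ if and only if $a_k=0$ for all $k$.
\item For $\familyParam<0$, $\familyParam\to0$ (the weighted geometric mean $\prod_k a_k^{w_k}$) and $\familyParam\to-\infty$ (the min), $M_\familyParam(a)=0$ if and only if $a_k=0$ for some $k$.
\end{itemize}
For $\familyParam<0$ the expression involves $0^{\familyParam}=\infty$, so I adopt the standard continuous extension $M_\familyParam(a)=\lim_{a'\downarrow a}M_\familyParam(a')$. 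With this convention, $\big(\sum_k w_k a_k^{\familyParam}\big)^{1/\familyParam}\to 0$ whenever some $a_k\to0$, because the inner sum diverges and $1/\familyParam<0$. Conversely, it is positive when all $a_k>0$. The two limiting cases are immediate from the explicit forms in \Cref{tab:f-ensembles}.

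\textbf{Step 3 (combine).} Suppose $M_\familyParam(\pPrefix_1(\charseq'),\dots,\pPrefix_K(\charseq'))=0$.
\begin{itemize}
\item In the coverage regime, Step 2 gives $\pPrefix_k(\charseq')=0$ for all $k$. Step 1 then gives $\p_k(\charseq)=0$ for all $k$, and Step 2 applied in the reverse direction gives $M_\familyParam(\p_1(\charseq),\dots,\p_K(\charseq))=0$.
\item In the consensus regime, Step 2 gives some $k$ with $\pPrefix_k(\charseq')=0$. Step 1 gives $\p_k(\charseq)=0$, so the string-level mean vanishes by Step 2.
\end{itemize}
Both arguments are really a monotonicity argument: $M_\familyParam$ is coordinatewise nondecreasing and nonnegative, and its zero set is an ``upward-closed-toward-zero'' set.

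The only delicate point is Step 2 for $\familyParam<0$ and the $\familyParam\to0$ limit. There the formula is not literally defined at zero inputs, so I must fix the continuous-extension convention and check that the zero pattern survives the limit. I would handle this by squeezing:
\[
\min_k a_k \;\le\; M_\familyParam(a) \;\le\; w_j^{1/\familyParam}\,a_j \quad \text{for each } j \text{ and } \familyParam<0.
\]
The upper bound alone yields $M_\familyParam(a)=0$ whenever $a_j=0$, and this bound holds uniformly in the limiting procedure.
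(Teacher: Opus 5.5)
Your proof is correct and follows the paper's argument. Both split into cases according to when each generalized mean vanishes, and then use that a vanishing prefix probability forces the string probability to vanish. Where you differ, your version is slightly tidier: you read Step 1 directly off the prefix-sum definition, so it covers proper prefixes, whereas the paper appeals to the factorization; and you handle general weights $w_k$, whereas the paper uses uniform $1/K$ weights. Your closing remark — any coordinatewise nondecreasing, nonnegative $f$ is annihilative because $\vec{p}_k(x') \ge p_k(x)$ — already proves all cases at once.
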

\begin{proof}
        Let $\pExpert_1(\charseq), \ldots, \pExpert_K(\charseq)$ denote the probabilities assigned to string $\charseq$ by $K$ language models, and let $\charseq'$ be a prefix of $\charseq$. We denote the prefix probabilities by $\prefixOp{\pExpert}_1(\charseq'), \ldots, \prefixOp{\pExpert}_K(\charseq')$. We evaluate different domains of $\familyParam$: 
    \begin{enumerate}
        \item $\familyParam > 0$. For positive $\familyParam$, we have
        \begin{equation}
            M_\familyParam(\prefixOp{\pExpert}_1(\charseq'), \ldots, \prefixOp{\pExpert}_K(\charseq')) = \left(\frac{1}{K} \sum_{i=1}^K \prefixOp{\pExpert}_i(\charseq')^\familyParam\right)^{1/\familyParam} = 0
        \end{equation}
        if and only if $\sum_{i=1}^K \prefixOp{\pExpert}_i(\charseq')^\familyParam = 0$. Since each term $\prefixOp{\pExpert}_i(\charseq')^\familyParam \geq 0$, this requires $\prefixOp{\pExpert}_i(\charseq') = 0$ for all $i \in \{1, \ldots, K\}$.  By \Cref{eq:factored-lm}, this implies $\pExpert_k(\charseq) = 0$, which implies
        \begin{equation}
            M_\familyParam(\pExpert_1(\charseq), \ldots, \pExpert_K(\charseq)) = 0.
        \end{equation}
        Thus, annihilativity holds.
        \item $\familyParam \to 0$. Taking the limit as $\familyParam \to 0$, we obtain the geometric mean:
        \begin{equation}
            M_0(\prefixOp{\pExpert}_1(\charseq'), \ldots, \prefixOp{\pExpert}_K(\charseq')) = \left(\prod_{i=1}^K \prefixOp{\pExpert}_i(\charseq')\right)^{1/K} = 0
        \end{equation}
        if and only if $\prefixOp{\pExpert}_i(\charseq') = 0$ for at least one $i \in \{1, \ldots, K\}$. By \Cref{eq:factored-lm}, if $\prefixOp{\pExpert}_i(\charseq') = 0$, then $\pExpert_i(\charseq) = 0$, which implies
        \begin{equation}
            M_0(\pExpert_1(\charseq), \ldots, \pExpert_K(\charseq)) = \left(\prod_{j=1}^K \pExpert_j(\charseq)\right)^{1/K} = 0.
        \end{equation}
        Thus, annihilativity holds.
        \item  $\familyParam < 0$. For negative $\familyParam$, the generalized mean is defined by continuity: if any $\prefixOp{\pExpert}_i(\charseq') = 0$, we set $M_\familyParam = 0$. If $M_\familyParam(\prefixOp{\pExpert}_1(\charseq'), \ldots, \prefixOp{\pExpert}_K(\charseq')) = 0$, then at least one $\prefixOp{\pExpert}_i(\charseq') = 0$. By \Cref{eq:factored-lm}, $\pExpert_i(\charseq) = 0$, which implies
        \begin{equation}
            M_\familyParam(\pExpert_1(\charseq), \ldots, \pExpert_K(\charseq)) = 0.
        \end{equation}
        Thus, annihilativity holds. 
        \item $\familyParam \to \pm\infty$. In the limit $\familyParam \to -\infty$, we obtain $M_{-\infty} = \min_{i} \prefixOp{\pExpert}_i(\charseq')$, which equals zero if and only if at least one $\prefixOp{\pExpert}_i(\charseq') = 0$. In the limit $\familyParam \to +\infty$, we obtain $M_{+\infty} = \max_{i} \prefixOp{\pExpert}_i(\charseq')$, which equals zero if and only if all $\prefixOp{\pExpert}_i(\charseq') = 0$. In both cases, by \Cref{eq:factored-lm}, the corresponding $\pExpert_i(\charseq) = 0$, which implies $M_{\pm\infty}(\pExpert_1(\charseq), \ldots, \pExpert_K(\charseq)) = 0$. Thus, annihilativity holds.
    \end{enumerate}
\end{proof}

\section{Experimental Details}\label{sec:exp-details}
We served models on two 24GB RTX 4090 GPUs and ran the SMC loop on one 24GB RTX 3090 GPU, where the generated predictions were aggregated. We used vLLM\footnote{https://docs.vllm.ai/en/} for serving all models. The models are loaded in bfloat16 data type. The sampling temperature is set to $1.0$.

\myparagraph{Runtime analysis.} In \autoref{tab:runtimes_json}, we show per sample, averaged runtimes for token-level, as well as byte-level SMC, and compare it to running inference on a single base model, and doing standard locally normalized ensembling. Local ensembling incurs approximately 3-4x overhead over base model inference, reflecting the cost of running two forward passes. Token-level SMC with $M=10$ particles requires 15-24 seconds per instance, roughly 6-7x slower than local ensembling, with runtime scaling approximately linearly in $M$. Byte-level SMC is substantially more expensive, requiring 47-167 seconds per instance at $
M=10$, comparable to token-level SMC with $
M=75$. This increased cost reflects the longer sequence lengths at the byte level: each token expands to multiple bytes, proportionally increasing the number of SMC steps and the number of model calls required to score each particle.
A way to improve runtimes is to enhance batching or optimize pruning in our byte-level language model.

\begin{table}[h]
\small
\centering
\caption{Average per-instance runtime (seconds) on JSON. Token SMC and Byte SMC use $M$ particles.}
\label{tab:runtimes_json}
\begin{tabular}{l r rrrr r}
\toprule
& & &\multicolumn{3}{c}{Token SMC} & Byte SMC \\
\cmidrule(lr){4-6} \cmidrule(lr){7-7}
Model & Base Model & Local Ensemble & $M{=}10$ & $M{=}25$ & $M{=}75$ & $M{=}10$ \\
\midrule
Llama-3.1-8B & 0.76 & 2.34 & 15.24 & 32.88 & 104.30 & 63.45 \\
Phi-4 & 1.42 & 3.84 & 23.27 & 59.90 & 167.59 & 166.76 \\
Qwen2.5-7B & 0.79 & 2.96 & 18.89 & 45.64 & 124.50 & 47.52 \\
\bottomrule
\end{tabular}
\end{table}

\section{Prompts}\label{sec:prompts}
As discussed in \autoref{sec:experiments}, all of the models in our experiments are instruction-tuned.
For SPIDER, we use the prompt from \cite{loula2025syntactic}, which is based on the \cite{sun2023sqlprompt}. 
For BBH, we use the prompt template from \cite{polo2024efficient}. The detailed prompt templates for SPIDER, Big-Bench Hard (Word Sorting), JSON Schema are given below.

\clearpage
\subsection{JSON Schema}
\begin{lstlisting}[style=mintedlike]
# Prompt 1
[
    {
        'role': 'system', 
        'content': 'You need to generate a JSON object that matches the schema below. Output the JSON object on a single line. DO NOT use multiple lines and DO NOT output any other text.'
    },
    {'role': user', 'content': '<example schema input string>'},
    {'role': 'assistant', 'content': '<example JSON output string>'},
    {'role': 'user', 'content': json.dumps(schema)},
]
# Prompt 2
[
    {
        'role': 'system', 
        'content': 'Task: JSON object generation from schema specification.\n
            Input: JSON Schema (draft-04 or later)\n
            Output: Valid JSON object satisfying all schema constraints\n
            Format: Single-line JSON, no extraneous text'
    },
    {'role': user', 'content': '<example schema input string>'},
    {'role': 'assistant', 'content': '<example JSON output string>'},
    {'role': 'user', 'content': json.dumps(schema)},
]
\end{lstlisting}

\clearpage
\subsection{Big-Bench Hard (Word Sorting)}
\begin{lstlisting}[style=mintedlike]
# Prompt 1
[
    {
        'role': 'system',
        'content': 'You are a helpful assistant that sorts words in alphabetical order. Your task is to take a list of words and return them sorted alphabetically. Alphabetical order means sorting words from A to Z, comparing character by character from left to right. For example, \'apple\' comes before \'banana\', and \'cat\' comes before \'dog\'. When words start with the same letter, compare the next letters. Output ONLY the sorted words separated by commas, with no additional text, explanation, or commentary. Do not include any prefixes, suffixes, or extra formatting - just the comma-separated sorted words.'
    },
    {
        'role': 'user', 
        'content': 'Sort the following words in alphabetical order.\n\nInput: <list of words>\nOutput:'
    }
]
# Prompt 2
[
    {
        'role': 'system',
        'content': 'You are a helpful assistant that sorts words in alphabetical order. Your task is to take a list of words and return them sorted alphabetically. Output ONLY the sorted words separated by commas, with no additional text, explanation, or commentary.'
    },
    {'role': 'user', 'content': '<example input string> ->'},
    {'role': 'assistant', 'content': '<example target string>'},
    {'role': 'user', 'content': '<list of words> ->'}
]
\end{lstlisting}

\clearpage
\subsection{SPIDER}
\begin{lstlisting}[style=mintedlike]
# Prompt 1
[
    {
        'role': 'system',
        'content': 'You are a coding assistant helping an analyst answer questions over business data in SQL. More specifically, the analyst provides you a database schema (tables in the database along with their column names and types) and asks a question about the data that can be solved by issuing a SQL query to the database. In response, you write the SQL statement that answers the question. You do not provide any commentary or explanation of what the code does, just the SQL statement ending in a semicolon.'
    },
    {
        'role': 'user', 
        'content': 'Here is a database schema:\n\n<Table(Columns) Schema String> Please write me a SQL statement that answers the following question: <target question>\n\nRemember, DO NOT provide any commentary or explanation of what the code does, just the SQL statement ending in a semicolon.'
    }
]
# Prompt 2
[
    {
        'role': 'system',
        'content': 'You are a coding assistant helping an analyst answer questions over business data in SQL. More specifically, the analyst provides you a database schema (tables in the database along with their column names and types) and asks a question about the data that can be solved by issuing a SQL query to the database. In response, you write the SQL statement that answers the question. You do not provide any commentary or explanation of what the code does, just the SQL statement ending in a semicolon.\n Here is the database schema for all following queries: \n\n<Columns=[]+FK Schema String>'
    },
    {
        'role': 'user', 
        'content': 'Please write me a SQL statement that answers the following question: <example question>\n\nRemember, DO NOT provide any commentary or explanation of what the code does, just the SQL statement ending in a semicolon.'
    },
    {
        'role': 'assistant',
        'content': '<example SQL query>;'
    },
    {
        'role': 'user', 
        'content': 'Please write me a SQL statement that answers the following question: <target question>\n\nRemember, DO NOT provide any commentary or explanation of what the code does, just the SQL statement ending in a semicolon.'
    }
]
\end{lstlisting}

\section{Additional Results}\label{sec:appendix-additional-results}

\subsection{Detailed Ensembling Results}
Complementarily to the condensed results shown in \autoref{tab:summary}, we provide a version that is aggregated by ensembling method \autoref{tab:func-comparison} which provides more detail on \autoref{fig:f-ensemble-diffs}, as well as the complete set of results in \autoref{tab:main-results}. 

\begin{table*}
\centering
\caption{Comparison of $\func$-ensemble functions (token-level SMC). \textbf{Bold}: highest average expected accuracy in column. $^{\dagger}$: non-overlapping 95\% CIs with best baseline. \colorbox{gray!15}{Shaded}: strongest single-prompt baseline.}
\label{tab:func-comparison}
\setlength{\tabcolsep}{4pt}
\footnotesize
\begin{tabular}{@{}l ccc ccc ccc@{}}
\toprule
& \multicolumn{3}{c}{JSON Schema} & \multicolumn{3}{c}{BBH (Word Sorting)} & \multicolumn{3}{c}{SPIDER (Text-to-SQL)} \\
\cmidrule(lr){2-4} \cmidrule(lr){5-7} \cmidrule(lr){8-10}
\textbf{Method} & Qwen & Phi & Llama & Qwen & Phi & Llama & Qwen & Phi & Llama \\
\midrule
Prompt 1 & \cellcolor{gray!15}\res{95.6}{0.8} & \cellcolor{gray!15}\res{83.5}{1.9} & \cellcolor{gray!15}\res{86.3}{1.6} & \cellcolor{gray!15}\res{13.9}{1.0} & \cellcolor{gray!15}\res{27.7}{1.2} & \cellcolor{gray!15}\res{38.6}{1.5} & \res{40.0}{1.5} & \res{24.2}{1.5} & \cellcolor{gray!15}\res{38.4}{1.8} \\
Prompt 2 & \res{93.8}{0.9} & \res{79.5}{5.6} & \res{84.3}{2.5} & \res{5.8}{0.9} & \res{24.3}{1.5} & \res{36.5}{1.9} & \cellcolor{gray!15}\res{53.6}{1.0} & \cellcolor{gray!15}\res{34.5}{1.7} & \res{36.3}{1.7} \\
\midrule
$\min$ &  \res{95.7}{0.4} & \win{\res{87.4}{0.3}}\sig & \win{\res{88.9}{1.1}} & \win{\res{15.3}{2.0}} & \win{\res{28.2}{1.1}} & \win{\res{42.3}{0.9}}\sig & \win{\res{51.2}{1.8}} & \win{\res{34.2}{0.7}} & \win{\res{41.9}{1.4}}\sig \\
Product & \win{\res{96.0}{0.2}} & \res{86.4}{1.0} & \res{88.5}{1.0} & \res{15.2}{1.4} & \res{27.6}{1.2} & \res{40.9}{0.7}\sig & \res{50.9}{1.1} & \res{30.8}{0.1} & \res{41.7}{1.7}\sig \\
Mixture & \res{95.3}{0.8} & \res{81.3}{0.5} & \res{85.1}{1.3} & \res{9.8}{1.0} & \res{26.5}{0.0} & \res{37.4}{0.7} & \res{45.9}{0.6} & \res{29.4}{0.1} & \res{38.2}{1.6} \\
$\max$ & \res{95.2}{0.2} & \res{79.3}{1.1} & \res{83.7}{1.6} & \res{9.1}{1.0} & \res{25.4}{1.3} & \res{34.7}{1.4} & \res{45.9}{0.7} & \res{29.5}{0.1} & \res{37.9}{1.7} \\
\bottomrule
\end{tabular}
\end{table*}

\begin{table*}
\centering
\caption{Table reporting expected ensemble and base model accuracy at equal model weights, 10 particles with 95\% confidence intervals evaluated across 5 seeds. For cross-model ensembling, we choose the prompt which has the best overall average base performance across models.}
\label{tab:main-results}
\small
\setlength{\tabcolsep}{4pt}

\newcommand{\crosslabel}[1]{
  \sbox0{\shortstack[l]{#1 \\ Cross-Model Byte SMC}}
  \raisebox{\dimexpr-12pt-0.5\ht0+0.5\dp0\relax}{\usebox0}
}

\begin{tabular}{l ccc ccc ccc}
\toprule
$\func$-Ensemble & \multicolumn{3}{c}{Big-Bench Hard (Word Sorting)} & \multicolumn{3}{c}{SPIDER (Text-to-SQL)} & \multicolumn{3}{c}{JSON Schema} \\
\cmidrule(lr){2-4} \cmidrule(lr){5-7} \cmidrule(lr){8-10}
& \modelname{Qwen} & \modelname{phi} & \modelname{Llama} & \modelname{Qwen} & \modelname{phi} & \modelname{Llama} & \modelname{Qwen} & \modelname{phi} & \modelname{Llama} \\
\midrule \midrule
Prompt 1 & \res{13.9}{1.0} & \res{27.7}{1.2} & \res{38.6}{1.5} & \res{40.0}{1.5} & \res{24.2}{1.5} & \res{38.4}{1.8} & \res{95.6}{0.8} & \res{83.5}{1.9} & \res{86.3}{1.6}\\
Prompt 2 & \res{5.8}{0.9} & \res{24.3}{1.5} & \res{36.5}{1.9} & \res{53.6}{1.0} & \res{34.5}{1.7} & \res{36.3}{1.7} & \res{93.8}{0.9} & \res{79.5}{5.6} & \res{84.3}{2.5} \\
\midrule \midrule
$\min$, Token Local & \res{15.0}{2.3} & \res{26.8}{5.1} & \res{45.0}{2.9} & \res{52.6}{1.4} & \res{36.0}{3.4} & \res{43.2}{2.1} &  \res{95.4}{2.3} & \res{85.0}{1.5} & \res{87.2}{3.1} \\
$\min$, Token SMC & \res{15.3}{2.0} & \res{28.2}{1.1} & \res{42.3}{0.9} & \res{51.2}{1.8} & \res{34.2}{0.7} & \res{41.9}{1.4} & \res{95.7}{0.4} & \res{87.4}{0.3} & \res{88.9}{1.1} \\
$\min$, Byte SMC & \res{15.5}{0.1} & \res{23.8}{1.5} & \res{34.0}{1.4} & \res{47.8}{1.7} & \res{34.5}{0.9} & \res{41.5}{2.0} & \res{92.8}{1.0} & \res{87.5}{0.7} & \res{88.9}{1.4} \\
\greydashedrule
\crosslabel{$\min$,} & \tikzmark{minQ1} & \tikzmark{minP1} & \tikzmark{minL1} & \tikzmark{minQ2} & \tikzmark{minP2} & \tikzmark{minL2} & \tikzmark{minQ3} & \tikzmark{minP3} & \tikzmark{minL3} \\[26pt]
\midrule
Product, Token Local& \res{14.8}{2.4} & \res{27.2}{5.1} & \res{43.2}{6.0} & \res{49.4}{2.9} & \res{32.0}{2.3} & \res{43.6}{3.4} & \res{95.4}{0.7} & \res{87.2}{4.4} & \res{88.6}{4.4} \\
Product, Token SMC& \res{15.2}{1.4} & \res{27.6}{1.2} & \res{40.9}{0.7} & \res{50.9}{1.1} & \res{30.8}{0.1} & \res{41.7}{1.7} & \res{96.0}{0.2} & \res{86.4}{1.0} & \res{88.5}{1.0} \\
Product, Byte SMC& \res{13.9}{1.9} & \res{22.9}{0.8} & \res{33.9}{1.8} & \res{48.3}{2.1} & \res{30.1}{3.1} & \res{42.4}{0.9} & \res{92.6}{0.6} & \res{87.4}{1.1} & \res{88.4}{0.7} \\
\greydashedrule
\crosslabel{Product,}& \tikzmark{prodQ1} & \tikzmark{prodP1} & \tikzmark{prodL1} & \tikzmark{prodQ2} & \tikzmark{prodP2} & \tikzmark{prodL2} & \tikzmark{prodQ3} & \tikzmark{prodP3} & \tikzmark{prodL3} \\[26pt]
\midrule
Mixture, Token Local& \res{9.6}{3.0} & \res{25.8}{5.8} & \res{41.2}{6.5} & \res{45.6}{3.7} & \res{28.4}{4.8} & \res{39.4}{6.3} & \res{95.2}{3.2} & \res{82.0}{4.7} & \res{83.4}{3.5} \\
Mixture, Token SMC& \res{9.8}{1.0} & \res{26.5}{0.0} & \res{37.4}{0.7} & \res{45.9}{0.6} & \res{29.4}{0.1} & \res{38.2}{1.6} & \res{95.3}{0.8} & \res{81.3}{0.5} & \res{85.1}{1.3} \\
Mixture, Byte SMC& \res{7.9}{7.0} & \res{19.3}{0.5} & \res{26.0}{1.2} & \res{40.2}{3.3} & \res{25.1}{7.4} & \res{37.0}{2.2} & \res{92.5}{0.9} & \res{78.2}{4.2} & \res{86.1}{1.0} \\
\greydashedrule
\crosslabel{Mixture,}& \tikzmark{mixQ1} & \tikzmark{mixP1} & \tikzmark{mixL1} & \tikzmark{mixQ2} & \tikzmark{mixP2} & \tikzmark{mixL2} & \tikzmark{mixQ3} & \tikzmark{mixP3} & \tikzmark{mixL3} \\[26pt]
\midrule
$\max$, Token Local& \res{8.6}{3.8} & \res{24.6}{5.8} & \res{37.5}{6.7} & \res{45.8}{4.2} & \res{28.4}{4.4} & \res{38.0}{4.3} & \res{96.4}{0.7} & \res{79.4}{4.5} & \res{82.4}{3.2} \\
$\max$, Token SMC& \res{9.1}{1.0} & \res{25.4}{1.3} & \res{34.7}{1.4} & \res{45.9}{0.7} & \res{29.5}{0.1} & \res{37.9}{1.7} & \res{95.2}{0.2} & \res{79.3}{1.1} & \res{83.7}{1.6} \\
$\max$, Byte SMC & \res{8.2}{0.6} & \res{18.8}{1.9} & \res{24.2}{1.9} & \res{40.7}{3.6} & \res{27.9}{2.1} & \res{37.4}{3.8} & \res{92.1}{1.6} & \res{76.0}{4.3} & \res{86.1}{1.2} \\
\crosslabel{$\max$,}& \tikzmark{maxQ1} & \tikzmark{maxP1} & \tikzmark{maxL1} & \tikzmark{maxQ2} & \tikzmark{maxP2} & \tikzmark{maxL2} & \tikzmark{maxQ3} & \tikzmark{maxP3} & \tikzmark{maxL3} \\[26pt]
\bottomrule
\end{tabular}

\begin{tikzpicture}[remember picture, overlay]
\small

\tikzset{
  crossline/.style={black, line width=0.2pt},
  endpoint/.style={circle, fill=gray, inner sep=0pt, minimum size=2.5pt}
}

\newcommand{\drawcross}[6]{
  \draw[crossline] ([yshift=-3pt]pic cs:#1) -- ([yshift=-3pt]pic cs:#2) 
    node[midway, fill=white, inner sep=1pt] {#4};
  \node[endpoint] at ([yshift=-3pt]pic cs:#1) {};
  \node[endpoint] at ([yshift=-3pt]pic cs:#2) {};
  \draw[crossline] ([yshift=-12pt]pic cs:#2) -- ([yshift=-12pt]pic cs:#3) 
    node[midway, fill=white, inner sep=1pt] {#5};
  \node[endpoint] at ([yshift=-12pt]pic cs:#2) {};
  \node[endpoint] at ([yshift=-12pt]pic cs:#3) {};
  \draw[crossline] ([yshift=-21pt]pic cs:#1) -- ([yshift=-21pt]pic cs:#3) 
    node[midway, fill=white, inner sep=1pt] {#6};
  \node[endpoint] at ([yshift=-21pt]pic cs:#1) {};
  \node[endpoint] at ([yshift=-21pt]pic cs:#3) {};
}

\drawcross{minQ1}{minP1}{minL1}{\res{29.4}{1.7}}{\res{40.0}{2.8}}{\res{27.6}{1.6}}
\drawcross{minQ2}{minP2}{minL2}{\res{50.9}{2.3}}{\res{37.6}{1.7}}{\res{53.4}{1.9}}
\drawcross{minQ3}{minP3}{minL3}{\res{90.8}{1.3}}{\res{76.6}{3.0}}{\res{83.6}{1.9}}

\drawcross{prodQ1}{prodP1}{prodL1}{\res{32.6}{1.1}}{\res{41.1}{1.0}}{\res{35.1}{3.9}}
\drawcross{prodQ2}{prodP2}{prodL2}{\res{51.7}{2.9}}{\res{39.1}{3.0}}{\res{55.5}{1.1}}
\drawcross{prodQ3}{prodP3}{prodL3}{\res{92.6}{0.9}}{\res{74.8}{4.6}}{\res{81.9}{3.7}}

\drawcross{mixQ1}{mixP1}{mixL1}{\res{16.7}{5.6}}{\res{18.7}{4.4}}{\res{17.9}{0.3}}
\drawcross{mixQ2}{mixP2}{mixL2}{\res{39.5}{2.4}}{\res{30.2}{1.7}}{\res{39.1}{2.3}}
\drawcross{mixQ3}{mixP3}{mixL3}{\res{83.7}{1.8}}{\res{85.1}{1.3}}{\res{88.3}{0.9}}

\drawcross{maxQ1}{maxP1}{maxL1}{\res{15.1}{0.1}}{\res{17.5}{1.5}}{\res{17.2}{2.1}}
\drawcross{maxQ2}{maxP2}{maxL2}{\res{35.8}{2.3}}{\res{33.3}{2.3}}{\res{35.3}{4.6}}
\drawcross{maxQ3}{maxP3}{maxL3}{\res{83.1}{1.8}}{\res{85.2}{1.0}}{\res{88.7}{1.1}}

\end{tikzpicture}
\end{table*}

\subsection{Prompt Performance Overlap and Resulting Ensemble Performance}\label{sec:app-overlap}
Complementing our discussion in \Cref{sec:results}, we examine how base model agreement affects ensemble performance. \Cref{fig:base-vs-ensemble} plots per-sample expected accuracy under each prompt, with color indicating product ensemble accuracy and marker opacity distinguishing samples where the ensemble exceeds the best single-prompt baseline.

The results show that ensembles primarily improve performance on samples where prompts perform moderately (the diagonal), rather than those where one prompt dominates. This aligns with the mechanics of consensus-seeking: when experts disagree, the product ensemble's gains are limited; however, when both partially succeed, the ensemble effectively concentrates probability on their intersection.

The pattern varies across models. For Llama, substantial mass lies along the diagonal, providing ample opportunity for ensemble gains. In contrast, Qwen exhibits more polarized behavior: samples tend to cluster near the axes (one prompt succeeds, the other fails) or in corners (both succeed or both fail), leaving fewer samples on the diagonal, where ensembling may help. This may explain why Qwen shows smaller ensemble improvements in \autoref{tab:main-results}.

\begin{figure*}[h!]
    \centering
    \includegraphics[width=0.8\linewidth]{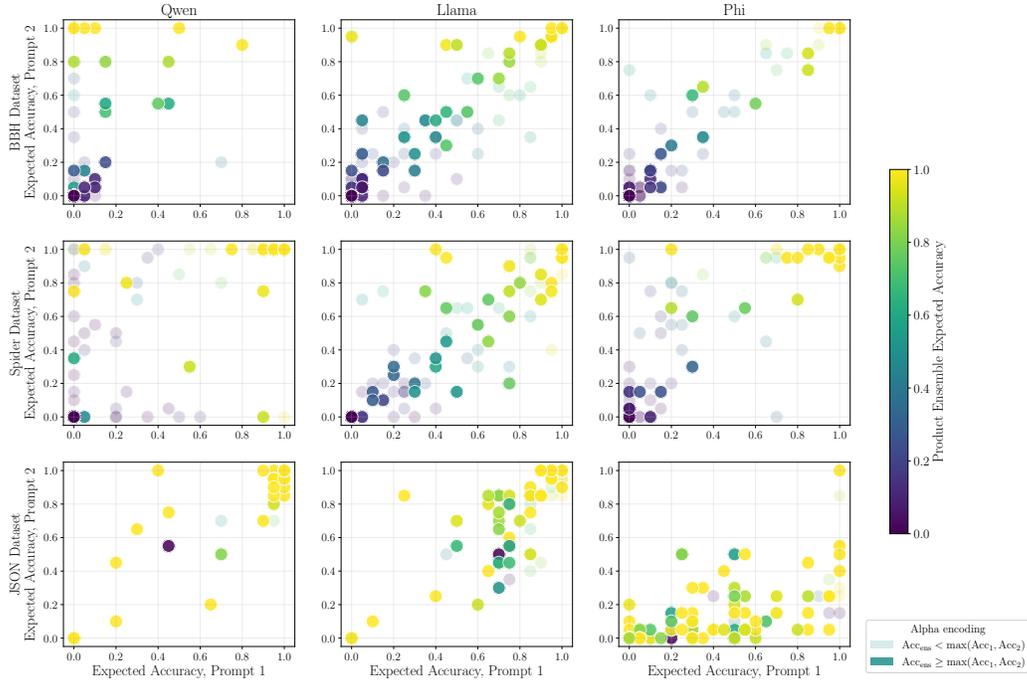}
    \caption{
    Scatter plots comparing expected accuracy of two prompt templates for all datasets and models. Color indicates product ensemble (token-level SMC) expected accuracy. Points with higher alpha indicate cases where the ensemble exceeds the best single template. Marker size scales with the improvement margin.}
    \label{fig:base-vs-ensemble}
\end{figure*}

\subsection{Particle Study}\label{sec:app-particle-study}

Finally, in \autoref{fig:particle-study-json}, \autoref{fig:particle-study-bbh} and \autoref{fig:particle-study-spider}, we plot the approximation quality as a function of the number of particles. As expected, as the number of particles increases, the approximation quality improves and variance reduces.
Visually, we see that the increase in approximation quality saturates at around 10 to 25 particles for the evaluated scenarios.
Following prior work by \citet{loula2025syntactic}, and to balance inference time with approximation cost, we choose to evaluate all setups using 10 particles.

\begin{figure}
    \centering
    \includegraphics[width=\linewidth]{figs/json_schema_logz_line_monochrome_ticks.pdf}
    \caption{Particle study for JSON. We evaluate the approximation quality estimate $\mathbb{E}[\log \widehat Z]$ as a function of the number of particles being used for SMC.}
    \label{fig:particle-study-json}
\end{figure}

\begin{figure}
    \centering
    \includegraphics[width=\linewidth]{figs/bbh_logz_line_all_models_linear.pdf}
    \caption{Particle study for BBH. We evaluate the approximation quality estimate $\mathbb{E}[\log \widehat Z]$ as a function of the number of particles being used for SMC.}
    \label{fig:particle-study-bbh}
\end{figure}

\begin{figure}
    \centering
    \includegraphics[width=\linewidth]{figs/spider_logz_line_all_models_linear.pdf}
    \caption{Particle study for SPIDER. We evaluate the approximation quality estimate $\mathbb{E}[\log \widehat Z]$ as a function of the number of particles being used for SMC.}
    \label{fig:particle-study-spider}
\end{figure}

\subsection{Additional Correlation Computations}
Complementing \autoref{fig:approximation-quality-bbh}, we provide complete plots for all models and datasets in \autoref{fig:approx-full-json}, \autoref{fig:approx-full-bbh} and \autoref{fig:approx-full-spider}, alongside the average per-sample Spearman correlation in \autoref{tab:correlation}. Note that these correlation values differ: the table reports the average per-sample correlation, whereas the figures display the correlation across the entire dataset using $z$-normalized values to account for sample difficulty.

\begin{table*}[t]
\centering
\caption{Average per-sample Spearman correlation ($r$) between $\log \hat{Z}$ and expected accuracy. Values are reported as the mean of sample-level correlations $\pm$ standard error. Significance levels ($^{***}p<0.001$, $^{**}p<0.01$, $^{*}p<0.05$) are determined via a one-sample $t$-test against zero. Note that these values deviate from those in \autoref{fig:approximation-quality-bbh}, \autoref{fig:approx-full-json}, \autoref{fig:approx-full-spider}, \autoref{fig:approx-full-bbh} because they are computed per sample and then averaged.}
\label{tab:correlation}
\setlength{\tabcolsep}{1.4pt}
\footnotesize
\begin{tabular}{@{}l ccc ccc ccc@{}}
\toprule
& \multicolumn{3}{c}{BBH (Word Sorting)} & \multicolumn{3}{c}{SPIDER (Text-to-SQL)} & \multicolumn{3}{c}{JSON Schema} \\
\cmidrule(lr){2-4} \cmidrule(lr){5-7} \cmidrule(lr){8-10}
\textbf{Method} & Qwen & Phi & Llama & Qwen & Phi & Llama & Qwen & Phi & Llama \\
\midrule
$\min$    & \res{0.01}{0.07}       & \res{0.15}{0.04}$^{***}$ & \res{0.22}{0.04}$^{***}$ & \res{0.04}{0.03}       & \res{-0.03}{0.03}       & \res{0.09}{0.03}$^{**}$ & \res{0.08}{0.03}$^{*}$ & \res{0.12}{0.03}$^{***}$ & \res{0.12}{0.02}$^{***}$ \\
Product   & \res{0.03}{0.06}       & \res{0.18}{0.04}$^{***}$ & \res{0.28}{0.04}$^{***}$ & \res{0.08}{0.05}       & \res{0.09}{0.05}       & \res{0.16}{0.04}$^{***}$ & \res{0.16}{0.03}$^{***}$ & \res{0.21}{0.02}$^{***}$ & \res{0.31}{0.03}$^{***}$ \\
Sum       & \res{0.03}{0.03}       & \res{0.00}{0.03}       & \res{-0.05}{0.02}$^{*}$  & \res{-0.03}{0.03}       & \res{-0.03}{0.02}       & \res{-0.02}{0.02}        & \res{0.03}{0.03} & \res{-0.01}{0.02} & \res{-0.02}{0.02} \\
$\max$    & \res{-0.07}{0.06}      & \res{-0.01}{0.04}       & \res{-0.14}{0.04}$^{***}$& \res{-0.05}{0.03}       & \res{-0.09}{0.03}$^{**}$& \res{0.04}{0.02}        & \res{-0.05}{0.04} & \res{-0.05}{0.02}$^{*}$ & \res{-0.01}{0.02} \\
\bottomrule
\end{tabular}
\end{table*}

\begin{figure*}
    \centering
    \includegraphics[width=\linewidth]{figs/json_schema_expected_accuracy_vs_logz_aggregated.pdf}
    \caption{Relationship between expected accuracy and the estimated log marginal likelihood, $\log \hat Z$, on JSON for all models. Each datapoint represents (sample, particle) configurations aggregated across seeds. Both metrics are $z$-scored per example to normalize for variations in problem difficulty and scale. Points are colored by the number of particles used for estimation. Marginal plots display the kernel density estimates for the respective axes. The grey line and shaded band indicate a linear regression fit with a 95\% confidence interval. Pearson correlation coefficients $\rho$ are annotated in the top-left ($^{***}p<0.001$, $^{*}p<0.05$).}
    \label{fig:approx-full-json}
\end{figure*}

\begin{figure*}
    \centering
    \includegraphics[width=\linewidth]{figs/bbh_expected_accuracy_vs_logz_aggregated_full.pdf}
    \caption{Relationship between expected accuracy and the estimated log marginal likelihood, $\log \hat Z$, on BBH for all models. Each datapoint represents (sample, particle) configurations aggregated across seeds. Both metrics are $z$-scored per example to normalize for variations in problem difficulty and scale. Points are colored by the number of particles used for estimation. Marginal plots display the kernel density estimates for the respective axes. The grey line and shaded band indicate a linear regression fit with a 95\% confidence interval. Pearson correlation coefficients $\rho$ are annotated in the top-left ($^{***}p<0.001$, $^{*}p<0.05$).}
    \label{fig:approx-full-bbh}
\end{figure*}

\begin{figure*}
    \centering
    \includegraphics[width=\linewidth]{figs/spider_expected_accuracy_vs_logz_aggregated.pdf}
    \caption{Relationship between expected accuracy and the estimated log marginal likelihood, $\log \hat Z$, on SPIDER for all models. Each datapoint represents (sample, particle) configurations aggregated across seeds. Both metrics are $z$-scored per example to normalize for variations in problem difficulty and scale. Points are colored by the number of particles used for estimation. Marginal plots display the kernel density estimates for the respective axes. The grey line and shaded band indicate a linear regression fit with a 95\% confidence interval. Pearson correlation coefficients $\rho$ are annotated in the top-left ($^{***}p<0.001$, $^{*}p<0.05$).}
    \label{fig:approx-full-spider}
\end{figure*}

\end{document}